\newtheorem{corollary}{Corollary}
\newtheorem{theorem}{Theorem}
\newtheorem{lemma}{Lemma}
\def \x {\mathbf{x}}
\def \W {\mathbf{W}}
\def \U {\mathbf{U}}
\begin{document}

\title{Delaytron: Efficient Learning of Multiclass Classifiers with Delayed Bandit Feedbacks}

\author{Naresh Manwani, Mudit Agarwal\\
Machine Learning Lab, IIIT Hyderabad, India\\
naresh.manwani@iiit.ac.in, mudit.agarwal@research.iiit.ac.in
}



\maketitle

\begin{abstract}
In this paper, we present online algorithm called {\it Delaytron} for learning multi class classifiers using delayed bandit feedbacks. The sequence of feedback delays $\{d_t\}_{t=1}^T$ is unknown to the algorithm. At the $t$-th round, the algorithm observes an example $\mathbf{x}_t$ and predicts a label $\tilde{y}_t$ and receives the bandit feedback $\mathbb{I}[\tilde{y}_t=y_t]$ only $d_t$ rounds later. When $t+d_t>T$, we consider that the feedback for the $t$-th round is missing. We show that the proposed algorithm achieves regret of $\mathcal{O}\left(\sqrt{\frac{2 K}{\gamma}\left[\frac{T}{2}+\left(2+\frac{L^2}{R^2\Vert \W\Vert_F^2}\right)\sum_{t=1}^Td_t\right]}\right)$ when the loss for each missing sample is upper bounded by $L$. In the case when the loss for missing samples is not upper bounded, the regret achieved by Delaytron is $\mathcal{O}\left(\sqrt{\frac{2 K}{\gamma}\left[\frac{T}{2}+2\sum_{t=1}^Td_t+\vert \mathcal{M}\vert T\right]}\right)$ where $\mathcal{M}$ is the set of missing samples in $T$ rounds. These bounds were achieved with a constant step size which requires the knowledge of $T$ and $\sum_{t=1}^Td_t$. For the case when $T$ and $\sum_{t=1}^Td_t$ are unknown, we use a doubling trick for online learning and proposed Adaptive Delaytron. We show that Adaptive Delaytron achieves a regret bound of $\mathcal{O}\left(\sqrt{T+\sum_{t=1}^Td_t}\right)$. We show the effectiveness of our approach by experimenting on various datasets and comparing with state-of-the-art approaches.
\end{abstract}

\begin{IEEEkeywords}
Online learning, multiclass classification, bandit feedback, delayed feedback, regret.
\end{IEEEkeywords}

\section{Introduction}
Learning classifiers using bandit feedback is a well-addressed problem in the machine learning community. The learner can only get feedback on whether the predicted label is correct in the bandit feedback setting. In \cite{10.1145/1390156.1390212}, the Banditron algorithm is proposed to learn with bandit feedback. When the data is linearly
separable, Banditron achieves a mistake bound of $\mathcal{O}(\sqrt{T})$ in the expected sense. In the general
case, Banditron makes $\mathcal{O}(T^{2/3})$ mistakes. On the other hand, Newtron \cite{Hazan2011NewtronAE}, which uses the online Newton method on a strongly convex objective function (adding regularization term with the loss function), achieves $\mathcal{O}(\log T)$ regret bound in the best case and $\mathcal{O}(T^{2/3})$ in the worst case. An exact passive-aggressive approach is proposed to learn classifiers in the bandit setting in \cite{pmlr-v129-arora20a}. A second-order algorithm is discussed in \cite{pmlr-v70-beygelzimer17a} which achieves $\tilde{\mathcal{O}}(\sqrt{T})$ regret. Banditron algorithm has been extended in the noisy bandit feedback setting in \cite{10.1007/978-3-030-75765-6_36} and shown to achieve $\mathcal{O}(T^{2/3})$ mistake bound in the worst case. A dilute bandit feedback setting is proposed in \cite{10.1007/978-3-030-89188-6_5} in which the algorithm predicts a subset of labels and gets the feedback on whether the actual label lies in that set. It is shown that even with this weaker bandit feedback, the algorithm can learn good classifiers \cite{10.1007/978-3-030-89188-6_5}.

The above approaches assume that the feedback for trial $t$ is received in the same round—however, it is not the case in many practical situations. 
Consider the example of sponsored advertising in which user's queries are answered as a list of product ads to maximize the clicks (by considering the query as well as the user's history and buying patterns) \cite{10.1145/2648584.2648589}. Users' clicks on a particular ad might indicate that the user is interested in that product. However, the user may take their own sweet time to click on an ad, and sometimes it may not. This causes a delay in the feedback on whether the user liked some ad or not. Meanwhile, the system still serves ads to other users and does not wait to know previous users' click outcomes. This situation also arises in the healthcare system where the feedback of a specific treatment on a patient can be received only after some time or delay \cite{Ong2018DelayIR}. This delay is natural as it takes time to observe the effects of the treatment. However, the healthcare professional still needs to treat other patients for the same problem before watching the treatment outcome of the previous patients. In finance, online learning algorithms which are used to manage portfolios always suffer from information and transaction delays from the market \cite{Lim2010TheDO}. Financial firms invest a massive amount of effort in minimizing these delays.

Adversarial multi arm bandits were first studied in \cite{pmlr-v49-cesa-bianchi16}
under the assumption of
fixed delay $d$. They gave a lower bound of $\Omega(\max(\sqrt{kT},\sqrt{dT\log k}))$ for $d \leq \frac{T}{\log k}$ and a matching upper bound of $\mathcal{O}\left(\sqrt{dT\log k}+\sqrt{kT\log k}\right)$. Bistritz et al. \cite{NEURIPS2019_ae2a2db4} proposed a variant of exp3 algorithm that can handle non-uniform delays and achieves $\mathcal{O}(\sqrt{kT\log(k)} + \sqrt{\sum_{t=1}^Td_t \log(k)})$ regret bound under the assumption that $T$ and $\sum_{t=1}^Td_t$ are known in advance. Bistritz et al. \cite{NEURIPS2019_ae2a2db4} provide a doubling scheme that achieves an $\mathcal{O}(\sqrt{k^2T\log(k)} + \sqrt{\sum_{t=1}^Td_t \log(k)})$ regret bound when $T$ and $\sum_{t=1}^Td_t$ are unknown. Stochastic gradient descent on convex objective function with delayed feedback is explored in \cite{NIPS2015_72da7fd6}. In \cite{NIPS2015_72da7fd6}, it is assumed that the convex function corresponding to time $t$ is revealed after a delay of $d_t$ only and proposed an approach which achieves a regret bound of $\mathcal{O}(\sqrt{\sum_{t=1}^Td_t})$.

There has been no work that deals with learning multiclass classifiers using delayed bandit feedback. This paper proposes an approach that efficiently learns classifiers using delayed bandit feedback, and this is the first work in this direction. Following are the key features of our proposed method.

There has been no work that deals with learning multiclass classifiers using delayed bandit feedback. This paper proposes an approach that efficiently learns classifiers using delayed bandit feedback. This is the first work in this direction. Following are the key features of our proposed approach.
\begin{enumerate}
    \item The proposed approach achieves a regret bound of $\mathcal{O}\left(\sqrt{\frac{2 K}{\gamma}\left[\frac{T}{2}+\left(2+\frac{L^2}{R^2\Vert \W\Vert_F^2}\right)\sum_{t=1}^Td_t\right]}\right)$ for constant step size. Here, $\mathcal{M}$ is the set of samples for which the feedback is not received till $T$, and $L$ is the upper bound on the loss for samples in $\mathcal{M}$. 
    \item When there is not boundedness assumption on the loss for samples in the set $\mathcal{M}$, the algorithm achieves a regret bound of $\mathcal{O}\left(\sqrt{\frac{2 K}{\gamma}\left[\frac{T}{2}+2\sum_{t=1}^Td_t+\vert \mathcal{M}\vert T\right]}\right)$ for constant step size.
    \item Constant step size requires the algorithm to know $T$ and $\sum_{t=1}^T d_t$ beforehand. In the case when  $T$ and $\sum_{t=1}^T d_t$ are not known, using doubling trick we proposed {\it Adaptive Delaytron} approach which achieves a regret bound of $\mathcal{O}(\sqrt{T+\sum_{t=1}^Td_t})$.
    \item We show experimentally that both the proposed algorithms (Delaytron and Adaptive Delaytron) can learn efficient classifiers using delayed bandit feedbacks for various datasets.
\end{enumerate}
 
\section{Proposed Approach: Delaytron}
Here, the goal is learn a function $g:\mathcal{X}\rightarrow [K]$ which given an example $\mathbf{x}\in \mathcal{X}\subseteq \mathbb{R}^d$, assigns a label in the set $[K]=\{1,\ldots,K\}$. A linear multi class classifier is modeled using a weight matrix $\W\in \mathbb{R}^{K\times d}$ as $g(\mathbf{x})=\arg\max_{j\in [K]}\;(\W\mathbf{x})_j$. To learn the weight matrix $W$, we use the training set $\{\mathbf{x}_1,y_1),\ldots,(\mathbf{x}_T,y_T)\}$ where $(\mathbf{x}_i,y_i)\in \mathcal{X}\times [K],\;\forall i\in [T]$. Performance of classifier $g$ is measured using 0-1 loss ($l_{0-1}(\W,(\mathbf{x}_i,y_i))=\mathbb{I}[g(\mathbf{x}_i)\neq y_i]$). In practice, convex surrogates of $l_{0-1}$ are used to learn the classifier. One such surrogate is multiclass hinge loss \cite{Crammer2002} described as 
\begin{align*}
    l_H(\W,(\mathbf{x}_i,y_i))=\max(0,1-(\W\mathbf{x}_i)_{y_i}+\max_{j\neq y_t}\;(\W\mathbf{x}_i)_j).
\end{align*}
The loss is 0 when $(\W\mathbf{x}_i)_{y_i}-\max_{j\neq y_t}\;(\W\mathbf{x}_i)_j\geq 1$ and $1-(\W\mathbf{x}_i)_{y_i}+\max_{j\neq y_t}\;(\W\mathbf{x}_i)_j$ otherwise.

At round $t$, the algorithm receives an example $\mathbf{x}_t$. It then finds $\hat{y}_t$ as $\hat{y}_t={\arg\max}_{j\in[K]}\;(\W_t\mathbf{x}_t)_j$. Then, it defines a distribution over the classes as $P_t(r)=(1-\gamma)\mathbb{I}[r=\hat{y}_t]+\frac{\gamma}{K},\;r\in[K]$. It samples $\tilde{y}_t$ using distribution $P^t$ and predicts $\tilde{y}_t$. The algorithm gets to know the bandit feedback ($\mathbb{I}[\tilde{y}_t=y_t]$) of predicting $\tilde{y}_t$ for round $t$ at the end of the $t+d_t-1$ round. In other words, the algorithms observes a delay of $d_t$ ($\geq 1$) to observe the feedback $\mathbb{I}[\tilde{y}_t=y_t]$. So, the feedback is available at the beginning of round $t+d_t$. Note that $y_t$ is the true label of the example $\mathbf{x}_t$. The goal of the algorithm is to minimize the number of mistakes $\hat{M}=\sum_{t=1}^T\mathbb{I}[\tilde{y}_t\neq y_t]$.

Let $\mathcal{S}_t$ denotes the set of feedbacks received at the beginning of round $t$. So, $r\in \mathcal{S}_t$ means that the feedback of predicting $\tilde{y}_r$ at round $r<t$ is received at round $t$. Since, the algorithm runs for $T$ rounds, all feedbacks for which $t+d_t>T$ are never received. These feedbacks are called missing samples and $\mathcal{M}$ denotes the set of missing samples. At round $t$, the algorithm considers all the feedbacks received at time $t$ and updates the parameters as $\W^{t+1}=\W^t+\eta\sum_{s\in \mathcal{S}_t}\tilde{\U}^s$, where $\eta>0$ is the step size. Also, for all $r\in [K],j\in [d]$, $\tilde{\U}^t_{r,j}$ is defined as
\begin{align*}
    \tilde{\U}^t_{r,j}=\mathbf{x}_t(j)\left[\frac{\mathbb{I}[\tilde{y}_t=y_t]\mathbb{I}[\tilde{y}_t=r]}{P_t(r)}-\mathbb{I}[\hat{y}_t=r]\right].
\end{align*}
Note that $\mathbb{E}_{\tilde{y}_t\sim P_t}[\tilde{\U}^t_{r,j}]=\U^t_{r,j}$ \cite{10.1145/1390156.1390212}, where matrix $\U^t$ is negative of the gradient of the loss $l_H(\W,(\mathbf{x}_t,y_t))$ evaluated at $\W^t$. The approach is described in details in Algorithm~1.

\begin{algorithm}[H]
    \caption{Delaytron}
    \label{alg:DBF}
    \textbf{Input}: $\gamma \in (0,0.5)$, Step size $\eta>0$\\
    \textbf{Initialize}: Set $\W^{1} = 0 \in \mathbb{R}^{K \times d}$ 
    \begin{algorithmic}[1] 
    \FOR{$t=1,\cdots,T$}
    \STATE Receive  $\mathbf{x}_{t} \in \mathbb{R}^{d}$. 
    \STATE Set $\hat{y}^{t} = {\arg \max}_{r\in[K]}\;(\W_{t}\mathbf{x}_{t})_r$
    \STATE Set $P_t(r) = (1-\gamma)\mathbb{I}[r=\hat{y}^{t}] + \frac{\gamma}{K},\; r\in[K]$
    \STATE Randomly sample $\tilde{y}_{t}$ according to $P_t$. 
    \STATE Predict $\tilde{y}_{t}$
    \STATE Obtain a set of delayed feedbacks $\mathbb{I}[\tilde{y}_s=y_s]$ for all $s\in \mathcal{S}_t$, where $\tilde{y}_s$ is the prediction made at round $s$ and $y_s$ is the corresponding ground truth.
    \FOR{$s\in S_t$}
    \STATE For all $r\in [K]$ and $j\in[d]$, compute $\tilde{\U}^s_{r,j}$ as below:\\ $\tilde{\U}^s_{r,j}=\mathbf{x}_t(j)\left[\frac{\mathbb{I}[\tilde{y}_s=y_s]\mathbb{I}[\tilde{y}_s=r]}{P_s(r)}-\mathbb{I}[\hat{y}_s=r]\right]$
    \ENDFOR
    \STATE Update: $\W^{t+1}$ = $\W^{t} + \eta \sum_{s\in \mathcal{S}_t}\tilde{\U}^s$
    \ENDFOR
    \end{algorithmic}
    \end{algorithm}

\section{Analysis}

We split sum of gradients in a single round and apply them one by one. For each $s\in \mathcal{S}_t$, let $\mathcal{S}_{t,s}=\{q\in \mathcal{S}_t\;:\;q<s\}$, which is the set of bandit feedback samples that the algorithm uses before the bandit feedback from round $s$ is used. Let $\mathcal{F}_t=\sigma\left(\{\tilde{y}_s\;|\;s+d_s\leq t\}\right)$,
which is generated from all the actions for which the feedback was received upto round $t$. 
The performance of the algorithm, is measured using the regret as follows.
\begin{align}
    \label{eq:regret-definition}
    \mathcal{R}(T)&=\mathbb{E}\left[\sum_{t=1}^T  l_H(\W_t,(\mathbf{x}_t,y_{t})) -\sum_{t=1}^Tl_H(\W^*,(\mathbf{x}_t,y_{t}))\right]
\end{align}
where $\W^*={\arg\min}_{\W}\sum_{t=1}^Tl_H(\W,(\mathbf{x}_t,y_{t})$.

We first show the regret bound for the examples other than missing samples. We use the following lemma to analyze the contribution of the "delay term" to the expected regret.
\begin{lemma}\cite{NEURIPS2019_ae2a2db4}
\label{lemma:count-lemma}
Let $d_t$ be the delay in the bandit feedback corresponding to round $t$. Let $\mathcal{S}_t$ be the set of feedbacks received at time $t$ and define $\mathcal{S}_{t,s}=\{r\in \mathcal{S}_t\;|\;r<s\}$ which is the set of feedback samples that the algorithm uses before the feedback from round $s$ is used. Let $\mathcal{M}$ be the set of missing samples, then
$\sum_{t=1}^T\sum_{s\in \mathcal{S}_t}\left[|\mathcal{S}_{t,s}|+\sum_{r=s}^{t-1}|\mathcal{S}_r|\right]\leq 2\sum_{t\notin \mathcal{M}}d_t$.
\end{lemma}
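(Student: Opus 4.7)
The plan is to reinterpret the left-hand side combinatorially as a count of ordered pairs of non-missing rounds and then bound that count by a double-counting swap. Write $t_s := s + d_s$ for the round at which the feedback of round $s$ arrives, so $s \in \mathcal{S}_t$ iff $t_s = t$ (and implicitly $s \notin \mathcal{M}$). The inner bracket $|\mathcal{S}_{t,s}| + \sum_{r=s}^{t-1}|\mathcal{S}_r|$ counts exactly the non-missing indices $q$ satisfying either (i) $t_q = t_s$ and $q < s$ (contributed by $|\mathcal{S}_{t,s}|$) or (ii) $s \le t_q \le t_s - 1$ (contributed by the sum). These two cases are disjoint and collapse, under strict lexicographic order on pairs, into the single condition $s \le t_q$ and $(t_q, q) <_{\mathrm{lex}} (t_s, s)$. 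Consequently,
\[
\text{LHS} \;=\; \#\bigl\{(s,q)\in([T]\setminus\mathcal{M})^2 \,:\, s\le t_q,\; (t_q,q) <_{\mathrm{lex}} (t_s,s)\bigr\}.
\]

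Next I would swap the order of summation and write $\text{LHS} = \sum_{q\notin\mathcal{M}} B_q$, where $B_q$ counts the valid $s$'s for a given $q$. The strict lexicographic inequality excludes $s=q$, so I split $B_q = B_q^{>} + B_q^{<}$ according to whether $s > q$ or $s < q$. For $s > q$: the constraint $s \le t_q$ forces $s \in (q,t_q]$, an interval of length $d_q$, so $B_q^{>} \le d_q$ and $\sum_q B_q^{>} \le \sum_{q\notin\mathcal{M}} d_q$. For $s < q$: since $s < q$ already breaks the lexicographic tie the wrong way, the condition becomes $t_s > t_q$, and $s \le t_q$ is automatic from $s < q \le t_q$. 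Swapping variables back gives $\sum_q B_q^{<} = \sum_{s\notin\mathcal{M}} \#\{q > s : t_q < t_s\}$; but any such $q$ satisfies $s < q \le t_q - 1 \le t_s - 2 = s + d_s - 2$, so at most $d_s$ values of $q$ qualify, yielding $\sum_q B_q^{<} \le \sum_{s\notin\mathcal{M}} d_s$. Adding the two bounds gives the claim $\text{LHS} \le 2\sum_{t\notin\mathcal{M}} d_t$.

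The main obstacle is the tie-handling in the first step: several feedbacks may arrive at the same round, so one has to verify that the $|\mathcal{S}_{t,s}|$ term is precisely the ingredient needed for the lexicographic rewriting to be exact, i.e., that every interfering pair is counted once and no pair involving a missing round sneaks in. Once that reinterpretation is in place, the swap-and-interval arguments in the second step are mechanical, and the factor of $2$ falls out naturally as the sum of the two split cases.
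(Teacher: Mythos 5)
Your proof is correct. Note, however, that the paper itself does not prove this lemma at all: it is imported verbatim from the cited reference \cite{NEURIPS2019_ae2a2db4}, with the remark that the proof can be found there, so there is no in-paper argument to compare against. Your self-contained double-counting argument is sound: the rewriting of the inner bracket as the set of non-missing $q$ with $s\le t_q$ and $(t_q,q)$ lexicographically before $(t_s,s)$ is exact (case (i) handles ties $t_q=t_s$ via $|\mathcal{S}_{t,s}|$, case (ii) handles $s\le t_q\le t_s-1$ via $\sum_{r=s}^{t-1}|\mathcal{S}_r|$, and since every $q$ appearing in some $\mathcal{S}_r$ with $r\le T$ is by definition non-missing, no missing index enters). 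The split $B_q=B_q^{>}+B_q^{<}$ then cleanly charges each interfering pair once to $d_q$ (via $q<s\le t_q$, at most $d_q$ choices of $s$) and once to $d_s$ (via $s<q\le t_q-1\le t_s-2$, at most $d_s$ choices of $q$), with both sums restricted to non-missing indices, which is exactly the claimed $2\sum_{t\notin\mathcal{M}}d_t$. This is the same charging-each-pair-to-one-of-the-two-delays idea used in the cited work, but your lexicographic formulation makes the tie-handling and the exclusion of missing samples explicit, which is a genuine service given that the paper leaves the argument entirely to the citation.
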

Proof of this lemma can be found in \cite{NEURIPS2019_ae2a2db4}. We now bound the regret $\mathbb{E}\left[\sum_{t\notin \mathcal{M}}\left\{l_H(\W^t,(x_t,y_t))-l_H(\W,(x_t,y_t))\right\}\right]$ with the help of following Theorem.
\begin{theorem}
Let $\{\mathbf{x}_t\}_{t=1}^T$ be the sequence of examples observed by Algorithm~1. Let $\mathbb{I}[\tilde{y}_t=y_t]$ be the bandit feedback corresponding to round $t$ which is observed only after a delay of $d_t$. Let $\Vert \mathbf{x}_t\Vert_2\leq R,\;\forall t\in[T]$. Then for any $\W\in \mathbb{R}^{K\times d}$, the regret achieved by Delaytron on examples for which the bandit feedback is received before $T$ is as follows.
\begin{align*}
 &\mathbb{E}\left[\sum_{t\notin \mathcal{M}}l_H(\W^t,(\x_t,y_t))-\sum_{t\notin \mathcal{M}}l_H(\W,(\x_t,y_t))\right]\\
 &\leq \frac{1}{2\eta}\Vert \W\Vert^2_F+\frac{\eta KR^2}{\gamma}\left[\frac{T}{2}+2\sum_{t\notin \mathcal{M}}d_t\right]
\end{align*}
\end{theorem}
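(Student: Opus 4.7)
My plan is to run the standard online gradient-descent potential-function argument on the composite update within each round, and then pay for the mismatch between the iterate at which the prediction was made and the iterate to which the delayed gradient is eventually applied.

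Within round $t$, I order the incoming feedbacks and introduce intermediate iterates $\W^{t,s}=\W^t+\eta\sum_{q\in\mathcal{S}_{t,s}}\tilde{\U}^q$, so that Delaytron's update becomes a composition of single-gradient steps $\W^{t,s}\mapsto\W^{t,s}+\eta\tilde{\U}^s$. Expanding $\|\W^{t,s}+\eta\tilde{\U}^s-\W\|_F^2$, summing telescopically over all pairs $(t,s)$ with $s\in\mathcal{S}_t$ and using $\W^1=0$, I obtain the standard inequality
\begin{equation*}
\sum_{t=1}^T\sum_{s\in\mathcal{S}_t}\langle\tilde{\U}^s,\W-\W^{t,s}\rangle \;\leq\; \frac{\|\W\|_F^2}{2\eta}+\frac{\eta}{2}\sum_{t=1}^T\sum_{s\in\mathcal{S}_t}\|\tilde{\U}^s\|_F^2.
\end{equation*}

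Next, I split $\langle\tilde{\U}^s,\W-\W^{t,s}\rangle=\langle\tilde{\U}^s,\W-\W^s\rangle-\langle\tilde{\U}^s,\W^{t,s}-\W^s\rangle$ and take expectations. Since $\mathbb{E}[\tilde{\U}^s\mid\mathcal{F}_s]=\U^s$ equals minus the subgradient of $l_H$ at $\W^s$ (the fact cited right after the definition of $\tilde{\U}^s$) and $\W^s$ is $\mathcal{F}_s$-measurable, convexity of the hinge loss gives $l_H(\W^s,(\x_s,y_s))-l_H(\W,(\x_s,y_s))\leq \mathbb{E}[\langle\tilde{\U}^s,\W-\W^s\rangle\mid\mathcal{F}_s]$. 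Summing over $s\notin\mathcal{M}$ produces exactly the regret on the left-hand side of the claim. For the "delay" piece, the identity $\W^{t,s}-\W^s=\eta\sum_{q}\tilde{\U}^q$ holds with $q$ ranging over exactly $|\mathcal{S}_{t,s}|+\sum_{r=s}^{t-1}|\mathcal{S}_r|$ feedbacks processed between rounds $s$ and $t$.

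It remains to estimate the two non-regret terms. The Banditron-style second-moment bound $\mathbb{E}[\|\tilde{\U}^s\|_F^2\mid\mathcal{F}_s]\leq KR^2/\gamma$ together with the fact that there are at most $T$ non-missing indices controls the gradient-norm term by $\eta KR^2T/(2\gamma)$. For the delay term I apply Cauchy--Schwarz pairwise and then $ab\leq\tfrac{1}{2}(a^2+b^2)$ to each product $\|\tilde{\U}^s\|_F\|\tilde{\U}^q\|_F$, which, after taking expectations, bounds each cross term by $KR^2/\gamma$ and yields
\[
\sum_{t,s}\mathbb{E}\bigl|\langle\tilde{\U}^s,\W^{t,s}-\W^s\rangle\bigr| \;\leq\; \frac{\eta KR^2}{\gamma}\sum_{t,s}\Bigl(|\mathcal{S}_{t,s}|+\sum_{r=s}^{t-1}|\mathcal{S}_r|\Bigr).
\]
Invoking Lemma~\ref{lemma:count-lemma} converts the combinatorial sum into $2\sum_{t\notin\mathcal{M}}d_t$, and assembling all the pieces gives precisely the claimed bound.

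The main hurdle I anticipate is the delay term: one has to be careful that $\|\tilde{\U}^s\|_F$ is not uniformly bounded, only bounded in expectation, so the cross-product $\langle\tilde{\U}^s,\tilde{\U}^q\rangle$ has to be symmetrised by the arithmetic--geometric inequality before taking expectation. The remaining bookkeeping of which feedbacks lie between rounds $s$ and $t$ in the processing order is exactly what Lemma~\ref{lemma:count-lemma} was designed to absorb, so once the symmetrisation is in place the rest is mechanical.
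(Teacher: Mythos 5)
Your proposal is correct and follows essentially the same route as the paper: intermediate iterates $\W^{t,s}$, the telescoping potential argument with $\W^1=0$, unbiasedness of $\tilde{\U}^s$ plus convexity to extract the regret term, the Banditron second-moment bound, and Lemma~\ref{lemma:count-lemma} to turn the count of in-between updates into $2\sum_{t\notin\mathcal{M}}d_t$. The only (harmless) deviation is in the delay term: you symmetrize $\Vert\tilde{\U}^s\Vert_F\Vert\tilde{\U}^q\Vert_F$ via $ab\leq\tfrac12(a^2+b^2)$ and use the second-moment bound on both factors, whereas the paper bounds $\Vert \U^s\Vert_F\leq R$ deterministically and controls $\mathbb{E}\Vert\W^{t,s}-\W^s\Vert_F$ through Jensen's inequality on the first moment, then loosens to the same $\eta KR^2/\gamma$ per-term constant.
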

\begin{proof}
Let $\W^{t,s}=\W^t + \eta \sum_{q\in \mathcal{S}_{t,s}} \tilde{U}^q$. Suppose $\mathcal{S}_t$ is nonempty and let $s'=\max \mathcal{S}_t$ be the last index in the set $\mathcal{S}_t$. Thus, we have
\begin{align*}
   \Vert \W^{t+1}-\W\Vert^2_F&=\Vert \W^{t,s'}-\W\Vert^2_F +2\eta\langle \W^{t,s'}-\W,\tilde{U}^{s'}\rangle \\
   &+ \eta^2\Vert \tilde{U}^{s'}\Vert_F^2
\end{align*}
Repeatedly unrolling the first term on the RHS gives us the following
\begin{align*}
    \Vert \W^{t+1}-\W\Vert^2_F&=\Vert \W^{t}-\W\Vert^2_F + \eta^2\sum_{s\in \mathcal{S}_t}\Vert \tilde{U}^{s}\Vert_F^2\\
    &+2\eta\sum_{s\in \mathcal{S}_t}\langle \W^{t,s}-\W,\tilde{U}^{s}\rangle 
\end{align*}
Taking expectation on both sides, we get
\begin{align*}
    &\mathbb{E}_{\mathcal{F}_t}[\Vert \W^{t+1}-\W\Vert^2_F]=\mathbb{E}_{\mathcal{F}_t}[\Vert \W^{t}-\W\Vert^2_F]\\
    &\;\;\;\;+2\eta\mathbb{E}_{\mathcal{F}_t}[\sum_{s\in \mathcal{S}_t}\langle \W^{t,s}-\W,\tilde{U}^{s}\rangle] + \eta^2\mathbb{E}_{\mathcal{F}_t}[\sum_{s\in \mathcal{S}_t}\Vert \tilde{U}^{s}\Vert_F^2]\\
    &=\mathbb{E}_{\mathcal{F}_t}[\Vert \W^{t}-\W\Vert^2_F]+2\eta\mathbb{E}_{\mathcal{F}_t}[\sum_{s\in \mathcal{S}_t}\langle \W^{t,s}-\W,U^{s}\rangle]\\
    &\;\;\;\;+ \eta^2\mathbb{E}_{\mathcal{F}_t}[\sum_{s\in \mathcal{S}_t}\Vert \tilde{U}^{s}\Vert_F^2]
    \end{align*}
    Where we used the fact that $\mathbb{E}_{\mathcal{F}_t}[\langle \W^{t,s}-\W,\tilde{U}^{s}\rangle]=\mathbb{E}_{\mathcal{F}_t}[\langle \W^{t,s}-\W,{U}^{s}\rangle]$. Now,
using the convexity property of the loss function $l_H(\W,(\mathbf{x},y))$, we know that
\begin{align*}
    &\langle \W^{t,s}-\W,U^{s}\rangle =\langle \W^{s}-\W,U^{s}\rangle+\langle \W^{t,s}-\W^s,U^{s}\rangle\\
    &\leq l_H(\W,(x_s,y_s))-l_H(\W^s,(x_s,y_s))+\langle \W^{t,s}-\W^s,U^{s}\rangle.
\end{align*}
Using this, we get 
\begin{align*}
&\mathbb{E}_{\mathcal{F}_t}\left[\sum_{s\in \mathcal{S}_t}l_H(\W^s,(x_s,y_s))\right]-\sum_{s\in \mathcal{S}_t}l_H(\W,(x_s,y_s))]\\
&\leq \frac{1}{2\eta}\mathbb{E}_{\mathcal{F}_t}\Bigg[\Vert \W^{t}-\W\Vert^2_F-\Vert \W^{t+1}-\W\Vert^2_F\\
&\;\;\;\;+\eta^2\sum_{s\in \mathcal{S}_t}\Vert \tilde{U}^{s}\Vert_F^2\Bigg]+\mathbb{E}_{\mathcal{F}_t}\left[\sum_{s\in \mathcal{S}_t}\langle \W^{t,s}-\W^s,U^{s}\rangle\right]
    \end{align*}
Summing from $t=1$ to $T$, we get the following.
\begin{align*}
&\mathbb{E}\left[\sum_{t=1}^T\sum_{s\in \mathcal{S}_t}l_H(\W^s,(x_s,y_s))\right]-\sum_{t=1}^T\sum_{s\in \mathcal{S}_t}l_H(\W,(x_s,y_s))\\
&\leq \frac{1}{2\eta}\mathbb{E}\left[\sum_{t=1}^T\left[\Vert \W^{t}-\W\Vert^2_F-\Vert \W^{t+1}-\W\Vert^2_F\right]\right]\\
&+\frac{\eta}{2}\mathbb{E}\left[\sum_{t=1}^T\sum_{s\in \mathcal{S}_t}\Vert \tilde{U}^{s}\Vert_F^2\right]+\mathbb{E}\left[\sum_{t=1}^T\sum_{s\in \mathcal{S}_t}\langle \W^{t,s}-\W^s,U^{s}\rangle\right]
\end{align*}
Using the fact that $\W^1=\mathbf{0}\in \mathbb{R}^{K\times d}$, we get \begin{align*}
    &\sum_{t=1}^T[\Vert \W^{t}-\W\Vert^2_F-\Vert \W^{t+1}-\W\Vert^2_F]= \Vert \W^1-\W\Vert_F^2\\
    &\;\;\;\;-\Vert \W^{T+1}-\W\Vert_F^2\leq \Vert \W^1-\W\Vert_F^2=\Vert \W\Vert_F^2.
    \end{align*}
    This gives, 
\begin{align*}
\nonumber &\mathbb{E}\left[\sum_{t=1}^T\sum_{s\in \mathcal{S}_t}l_H(\W^s,(x_s,y_s))\right]-\sum_{t=1}^T\sum_{s\in \mathcal{S}_t}l_H(\W,(x_s,y_s))\\
&\;\;\leq \frac{1}{2\eta}\Vert \W\Vert^2_F+\frac{\eta}{2}\sum_{t=1}^T\sum_{s\in \mathcal{S}_t}\mathbb{E}_{P_s}\left[\Vert \tilde{U}^{s}\Vert_F^2\right]\\
    &\;\;\;\;+\mathbb{E}\left[\sum_{t=1}^T\sum_{s\in \mathcal{S}_t}\langle \W^{t,s}-\W^s,U^{s}\rangle\right].
\end{align*}
Using Lemma~5 in \cite{10.1145/1390156.1390212}, we know that $\mathbb{E}_{P_s}\left[\Vert \tilde{U}^s \Vert_F^2\right] \leq  2\Vert \mathbf{x}_s\Vert^2\left(\frac{K}{\gamma}\mathbb{I}[y_s\neq \hat{y}_s]+\gamma \mathbb{I}[y_s= \hat{y}_s\right)$. Which can be further upper bounded as $\mathbb{E}_{P_s}\left[\Vert \tilde{U}^s \Vert_F^2\right] \leq \frac{K R^2}{\gamma}$, where we used the fact that $R^2=\max_{s\in [T]}\;\Vert \mathbf{x}\Vert^2$. Using this, the bound becomes
\begin{align}
    \nonumber &\mathbb{E}\left[\sum_{t=1}^T\sum_{s\in \mathcal{S}_t}l_H(\W^s,(x_s,y_s))\right]-\sum_{t=1}^T\sum_{s\in \mathcal{S}_t}l_H(\W,(x_s,y_s))\\
    &\leq  \frac{1}{2\eta}\Vert \W\Vert^2_F+\frac{\eta KTR^2}{2\gamma}+\mathbb{E}\left[\sum_{t=1}^T\sum_{s\in \mathcal{S}_t}\langle \W^{t,s}-\W^s,U^{s}\rangle\right].\label{eq:regret-temp}
\end{align}
Each summand $\langle \W^{t,s}-\W^s,U^{s}\rangle$ contributes loss proportional to the distance between the matrix $\W^s$ when update $\tilde{U}^s$ is generated and the matrix $\W^{t,s}$ when $\tilde{U}^s$ is applied. This distance is created by the other updates that are applied in between. Using Cauchy-Schwartz inequality, we first bound the delay terms as follows.
\begin{align}
    \nonumber \sum_{t=1}^T\sum_{s\in \mathcal{S}_t}\langle \W^{t,s}-\W^s,U^{s}\rangle&\leq \sum_{t=1}^T\sum_{s\in \mathcal{S}_t}\Vert \W^{t,s}-\W^s\Vert_F \Vert U^{s}\Vert_F\\
    &\leq R\sum_{t=1}^T\sum_{s\in \mathcal{S}_t}\Vert \W^{t,s}-\W^s\Vert_F
    \label{eq:bound-sumProd}
\end{align}
Consider a single term $\Vert \W^{t,s}-\W^{s}\Vert_F$ for fixed $t\in[T]$ and $s\in \mathcal{S}_t$. The difference $\W^{t,s}-\W^{s}$ is roughly the sum of updates received between round $s$ and when we apply the update from round $s$ in round $t$. Using triangle inequality, we have
\begin{align*}
    &\Vert \W^{t,s}-\W^s\Vert_F\leq \Vert \W^{t,s}-\W^t\Vert_F+\Vert \W^{t}-\W^s\Vert_F\\
    &= \Vert \W^{t,s}-\W^t\Vert_F+\Vert \sum_{r=s}^{t-1}(\W^{r+1}-\W^r)\Vert_F\\
    &\leq \Vert \W^{t,s}-\W^t\Vert_F+\sum_{r=s}^{t-1}\Vert \W^{r+1}-\W^r\Vert_F\\
    &\leq \eta \sum_{p\in \mathcal{S}_{t,s}}\Vert \tilde{U}^p \Vert_F+\eta\sum_{r=s}^{t-1}\sum_{q\in \mathcal{S}_r}\Vert \tilde{U}^q\Vert_F.
    \end{align*}
    Taking expectation on both sides, we get following bound.
    \begin{align}
   \nonumber  \mathbb{E}[\Vert \W^{t,s}-\W^s\Vert_F] &\leq \mathbb{E}\left[\sum_{p\in \mathcal{S}_{t,s}}\Vert \tilde{U}^p \Vert_F+\sum_{r=s}^{t-1}\sum_{q\in \mathcal{S}_r}\Vert \tilde{U}^q\Vert_F\right]
   \end{align}
   Using the concavity property of square root function, we observe that $\mathbb{E}_{P_s}[\Vert \tilde{U}^s\Vert_F]=\mathbb{E}_{P_s}[\sqrt{\Vert \tilde{U}^s\Vert_F^2}]\leq \sqrt{\mathbb{E}_{P_s}[\Vert \tilde{U}^s\Vert_F^2]}\leq \frac{\sqrt{K}R}{\sqrt{\gamma}}$. Thus,
   \begin{align}
     \mathbb{E}[\Vert \W^{t,s}-\W^s\Vert_F] & \leq \sqrt{\frac{KR^2}{\gamma}}\left(|\mathcal{S}_{t,s}|+\sum_{r=s}^{t-1}|\mathcal{S}_r|\right)
    \label{eq:bound-dist}
\end{align}
Using eq.(\ref{eq:bound-dist}) and (\ref{eq:bound-sumProd}) in eq.(\ref{eq:regret-temp}), we get the following.
\begin{align}
\nonumber &\mathbb{E}\left[\sum_{t=1}^T\sum_{s\in \mathcal{S}_t}l_H(\W^s,(x_s,y_s))\right]-\sum_{t=1}^T\sum_{s\in \mathcal{S}_t}l_H(\W,(x_s,y_s))\\
&\leq \frac{1}{2\eta}\Vert \W\Vert^2_F+\frac{KR^2}{\gamma}\left[\frac{\eta T}{2}+\sum_{t=1}^T\sum_{s\in \mathcal{S}_t}\left\{|\mathcal{S}_{t,s}|+\sum_{r=s}^{t-1}|\mathcal{S}_r|\right\}\right]
    \label{eq:regretbound-temp1}
\end{align}
Using Lemma~\ref{lemma:count-lemma}, we have $\sum_{t=1}^T\sum_{s\in \mathcal{S}_t}\left[|\mathcal{S}_{t,s}|+\sum_{r=s}^{t-1}|\mathcal{S}_r|\right]\leq 2\sum_{t\notin \mathcal{M}} d_t$. Using this in eq.(\ref{eq:regretbound-temp1}), we get
\begin{align*}
&\mathbb{E}\left[\sum_{t=1}^T\sum_{s\in \mathcal{S}_t}l_H(\W^s,(\mathbf{x}_s,y_s))\right]-\sum_{t=1}^T\sum_{s\in \mathcal{S}_t}l_H(\W,(\mathbf{x}_s,y_s))\\
&\leq \frac{1}{2\eta}\Vert \W\Vert^2_F+\frac{\eta KR^2}{\gamma}\left[\frac{T}{2}+2\sum_{t\notin \mathcal{M}}d_t\right]
\end{align*}
But, $\sum_{t=1}^T\sum_{s\in \mathcal{S}_t}\left\{l_H(\W^s,(x_s,y_s))-l_H(\W,(x_s,y_s)\right\}=\sum_{t\notin \mathcal{M}}\left\{l_H(\W^t,(x_t,y_t))-l_H(\W,(x_t,y_t)\right\}$. Thus, 
\begin{align}
 \nonumber &\mathbb{E}\left[\sum_{t\notin \mathcal{M}}l_H(\W^t,(x_t,y_t))-\sum_{t\notin \mathcal{M}}l_H(\W,(x_t,y_t))\right]\\
 &\leq \frac{1}{2\eta}\Vert \W\Vert^2_F+\frac{\eta KR^2}{\gamma}\left[\frac{T}{2}+2\sum_{t\notin \mathcal{M}}d_t\right].\label{eq:Regret-Unmissed}
\end{align}
\end{proof}
\subsection{Case 1: No Delay} When there is no delay, set $\mathcal{M}$ is empty and $d_t=1,\forall t\in [T]$. Thus, $\sum_{t\notin \mathcal{M}}d_t=\sum_{t=1}^Td_t=T$. In that case the regret bound becomes 
\begin{align*}
 &\mathbb{E}\left[\sum_{t=1}^T l_H(\W^t,(\x_t,y_t))-\sum_{t=1}^Tl_H(\W,(\x_t,y_t))\right]\\
 &\leq \frac{1}{2\eta}\Vert \W\Vert^2_F+\frac{5\eta KTR^2}{2\gamma}.
\end{align*}
Using $\eta=\frac{\Vert \W\Vert_F}{\sqrt{\frac{KTR^2}{\gamma}}}$, the regret bound becomes
\begin{align*}
 &\mathbb{E}\left[\sum_{t=1}^T l_H(\W^t,(\x_t,y_t))-\sum_{t=1}^Tl_H(\W,(\x_t,y_t))\right]\\
 &\leq 6R\Vert \W\Vert_F\sqrt{\frac{KT}{\gamma}},\;\forall \;\W\in \mathbb{R}^{K\times d}.
\end{align*}
We now derive the complete regret bound for two different cases. These cases are based on whether the losses for the missing samples are bounded.

\subsection{Case 2: Bounded Loss for Missing Samples}
Here, we assume that the loss $l_H(\W,(\mathbf{x}_t,y_t))$ is upper bounded for all the missing samples. Which means, $l_H(\W,(\mathbf{x}_t,y_t))\leq L, \;\forall t\in \mathcal{M}$. This seems a fair assumption as the bandit feedback for such examples is never revealed. 
\begin{theorem}
Let $\{\mathbf{x}_t\}_{t=1}^T$ be the sequence of examples observed by Algorithm~1. Let $\mathbb{I}[\tilde{y}_t=y_t]$ be the bandit feedback corresponding to round $t$ which is observed only after a delay of $d_t$. Let $\Vert \mathbf{x}_t\Vert_2\leq R,\;\forall t\in[T]$ and $\mathcal{M}=\{t\in[T]\;|\;t+d_t>T\}$. Let $l_H(\W,(\mathbf{x}_t,y_t))\leq L,\;\forall t\in \mathcal{M}, \forall W\in \mathbb{R}^{K\times d}$. Then for any $\W\in \mathbb{R}^{K\times d}$, the regret achieved by Delaytron is as follows.
\begin{align*}
&\mathcal{R}(T)\leq  R\Vert \W\Vert_F\sqrt{\frac{2 K}{\gamma}\left[\frac{T}{2}+2\sum_{t\notin \mathcal{M}}d_t\right]}+L\vert \mathcal{M}\vert
\end{align*}
\end{theorem}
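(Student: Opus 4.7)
The plan is to split $\mathcal{R}(T)$ according to whether the feedback for round $t$ arrives before round $T$, and control the two pieces separately. Writing
\[
\mathcal{R}(T) = \mathbb{E}\!\left[\sum_{t\notin \mathcal{M}}\{l_H(\W^t,(\x_t,y_t))-l_H(\W,(\x_t,y_t))\}\right] + \mathbb{E}\!\left[\sum_{t\in \mathcal{M}}\{l_H(\W^t,(\x_t,y_t))-l_H(\W,(\x_t,y_t))\}\right],
\]
the first expectation is exactly the quantity controlled by the preceding theorem, so I would simply substitute the bound $\tfrac{1}{2\eta}\Vert\W\Vert_F^2+\tfrac{\eta KR^2}{\gamma}\bigl[\tfrac{T}{2}+2\sum_{t\notin\mathcal{M}}d_t\bigr]$ for it without any new work.

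For the second expectation, I would invoke the hypothesis $l_H(\W,(\x_t,y_t))\leq L$ for every $t\in\mathcal{M}$ and every $\W\in\R^{K\times d}$; since this uniform bound also applies to the running iterates $\W^t$, and the multiclass hinge loss is non-negative, each summand is deterministically at most $L-0=L$. Thus the missing-sample piece is pathwise bounded by $L\lvert\mathcal{M}\rvert$, and its expectation is the same quantity.

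The final step is to optimize $\eta>0$ in the bound for the non-missing term. Its $\eta$-dependent part has the shape $A/\eta+\eta B$ with $A=\Vert\W\Vert_F^2/2$ and $B=(KR^2/\gamma)\bigl[T/2+2\sum_{t\notin\mathcal{M}}d_t\bigr]$, and is minimized by $\eta^\star=\sqrt{A/B}$ at value $2\sqrt{AB}=R\Vert\W\Vert_F\sqrt{\tfrac{2K}{\gamma}\bigl[\tfrac{T}{2}+2\sum_{t\notin\mathcal{M}}d_t\bigr]}$; adding the $L\lvert\mathcal{M}\rvert$ contribution from the missing samples then recovers the claimed bound. There is essentially no conceptual obstacle here, since all of the substantive analysis (the delayed-SGD unrolling, Cauchy--Schwarz together with the count lemma, and the unbiased-gradient variance control) was already carried out in the preceding theorem; what remains is a sum decomposition, a trivial bound on the missing terms, and an AM--GM tuning of the step size.
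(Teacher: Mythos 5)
Your proposal is correct and matches the paper's own argument: split the regret into non-missing and missing rounds, bound the first piece by Theorem~1, bound each missing-round summand by $L$ using nonnegativity of $l_H$ and the uniform hypothesis applied to $\W^t$, then tune $\eta$ to balance $\frac{1}{2\eta}\Vert\W\Vert_F^2$ against $\frac{\eta KR^2}{\gamma}\bigl[\frac{T}{2}+2\sum_{t\notin\mathcal{M}}d_t\bigr]$. Your optimal $\eta^\star$ is exactly the step size the paper chooses, so no further comparison is needed.
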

\begin{proof}
\begin{align}
\nonumber &\mathbb{E}\left[\sum_{t\in \mathcal{M}}l_H(\W^t,(\x_t,y_t))-\sum_{t\in \mathcal{M}}l_H(\W,(\x_t,y_t))\right]\\
&\leq \mathbb{E}\left[\sum_{t\in \mathcal{M}}l_H(\W^t,(\x_t,y_t))\right]\leq L\vert\mathcal{M}\vert\label{eq:Regret-Missing-Samples}
\end{align}
Using Theorem~1 and (\ref{eq:Regret-Missing-Samples}), we get, the regret as follows.
\begin{align*}
\mathcal{R}(T)&\leq \frac{1}{2\eta}\Vert \W\Vert^2_F+\frac{\eta KR^2}{\gamma}\left[\frac{T}{2}+2\sum_{t\notin \mathcal{M}}d_t\right]+L\vert\mathcal{M}\vert
\end{align*}
Using $\eta = \frac{\Vert \W\Vert_F}{\sqrt{\frac{2 KR^2}{\gamma}\left[\frac{T}{2}+2\sum_{t\notin \mathcal{M}}d_t\right]}}$, we get the following bound.
\begin{align*}
\mathcal{R}(T)\leq  R\Vert \W\Vert_F\sqrt{\frac{2 K}{\gamma}\left[\frac{T}{2}+2\sum_{t\notin \mathcal{M}}d_t\right]}+L\vert \mathcal{M}\vert
\end{align*}
\qed
\end{proof}
Our bound does not account for the delays of the missing samples, and it only depends on $\sum_{t\notin \mathcal{M}}d_t$. Moreover, counting delays that go beyond $T$ is redundant.
It is worth noting that $m=\vert \mathcal{M}\vert$ contribute at least $m(m+1)/2$ to the sum of delays $\sum_{t=1}^Td_t$. It happens in the following scenario. Feedback of round $T$ is delayed by one, feedback of round $(T-1)$ is delayed by two, and so on $m$ times. We use this idea and get regret bound independent of $\mathcal{M}$ in the following corollary.
\begin{corollary}
Let $\{\mathbf{x}_t\}_{t=1}^T$ be the sequence of examples observed by Algorithm~1. Let $\mathbb{I}[\tilde{y}_t=y_t]$ be the bandit feedback corresponding to round $t$ which is observed only after a delay of $d_t$. Let $\Vert \mathbf{x}_t\Vert_2\leq R,\;\forall t\in[T]$ and $\mathcal{M}=\{t\in[T]\;|\;t+d_t>T\}$. Let $l_H(\W,(\mathbf{x}_t,y_t))\leq L,\;\forall t\in \mathcal{M}, \forall W\in \mathbb{R}^{K\times d}$. Then for any $W\in \mathbb{R}^{K\times d}$, the regret of Delaytron is upper bounded as follows.
\begin{align*}
\mathcal{R}(T)\leq  R\Vert \W\Vert_F\sqrt{\frac{2 K}{\gamma}\left[\frac{T}{2}+\left(2+\frac{L^2}{R^2\Vert \W\Vert_F^2}\right)\sum_{t=1}^Td_t\right]}
\end{align*}
\end{corollary}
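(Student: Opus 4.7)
The plan is to apply a two-term Cauchy--Schwarz to the Theorem~2 bound and close the argument using the combinatorial observation from the paragraph preceding the corollary.

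I would view the Theorem~2 estimate
$$\mathcal{R}(T)\le R\Vert\W\Vert_F\sqrt{\tfrac{2K}{\gamma}\bigl[\tfrac{T}{2}+2\textstyle\sum_{t\notin\mathcal{M}}d_t\bigr]}+L\vert\mathcal{M}\vert$$
as an inner product $a_1 b_1 + a_2 b_2$ with $a_1=R\Vert\W\Vert_F\sqrt{2K/\gamma}$, $b_1=\sqrt{T/2+2\sum_{t\notin\mathcal{M}} d_t}$, $a_2=L$, and $b_2=\vert\mathcal{M}\vert$. Applying Cauchy--Schwarz yields
$$\mathcal{R}(T)\le \sqrt{\Bigl(\tfrac{2KR^2\Vert\W\Vert_F^2}{\gamma}+L^2\Bigr)\Bigl(\tfrac{T}{2}+2\textstyle\sum_{t\notin\mathcal{M}}d_t+\vert\mathcal{M}\vert^2\Bigr)}.$$

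Next, the combinatorial inequality $m(m+1)/2\le\sum_{t\in\mathcal{M}} d_t$ noted just before the corollary (worst case: $d_T=1,\,d_{T-1}=2,\ldots,d_{T-m+1}=m$) gives $\vert\mathcal{M}\vert^2\le m(m+1)\le 2\sum_{t\in\mathcal{M}} d_t$. Hence $2\sum_{t\notin\mathcal{M}} d_t + \vert\mathcal{M}\vert^2 \le 2\sum_{t=1}^T d_t$, and the bound simplifies to
$$\mathcal{R}(T)\le \sqrt{\Bigl(\tfrac{2KR^2\Vert\W\Vert_F^2}{\gamma}+L^2\Bigr)\Bigl(\tfrac{T}{2}+2\textstyle\sum_{t=1}^T d_t\Bigr)}.$$

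Comparing with the corollary's target $\sqrt{\tfrac{2KR^2\Vert\W\Vert_F^2}{\gamma}\bigl[T/2+(2+L^2/(R^2\Vert\W\Vert_F^2))\textstyle\sum d_t\bigr]}$, the difference of the squared right-hand sides equals $L^2\bigl[T/2+2\textstyle\sum d_t - \tfrac{2K}{\gamma}\textstyle\sum d_t\bigr]$, which is non-positive once $T/2 \le (2K/\gamma - 2)\sum d_t$; the bound $\sum d_t\ge T$ (since $d_t\ge 1$) together with $\gamma\in(0,1/2)$ forces $2K/\gamma-2>2$, so this holds and the corollary follows.

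The main obstacle I expect is guessing the correct Cauchy--Schwarz split---once $(a_i,b_i)$ are chosen so that $b_2^2=\vert\mathcal{M}\vert^2$ merges into the delay sum via the $m(m+1)/2$ inequality, the remainder is pure arithmetic. All the deeper analytic work (the delay accounting of Lemma~\ref{lemma:count-lemma} and the second-moment bound on $\tilde{\U}^s$) has already been absorbed into Theorem~2.
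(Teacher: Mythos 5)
Your argument is correct, but it follows a different route than the paper's. The paper takes the Theorem~2 bound $R\Vert \W\Vert_F\sqrt{\tfrac{2K}{\gamma}[\tfrac{T}{2}+2\sum_{t\notin\mathcal{M}}d_t]}+L\vert\mathcal{M}\vert$, replaces the additive term $L\vert\mathcal{M}\vert$ by $\tfrac{1}{2}\sqrt{KL^2m(m+1)/\gamma}$ (valid because $K/\gamma\geq 4$ under $\gamma\in(0,0.5)$ and $K\geq 2$), merges the two square roots into one using concavity of $\sqrt{x}$, and then absorbs $m(m+1)/2$ into $\sum_{t=1}^T d_t$; it never needs to compare against $T$. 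You instead apply Cauchy--Schwarz in $\mathbb{R}^2$ to the two-term bound, fold $\vert\mathcal{M}\vert^2\leq m(m+1)\leq 2\sum_{t\in\mathcal{M}}d_t$ into the delay sum (this is the sharper, per-$\mathcal{M}$ form of the paper's remark, and it is exactly what is needed), and then close by comparing squared bounds, which requires the extra observation that $\tfrac{T}{2}\leq(\tfrac{2K}{\gamma}-2)\sum_{t=1}^T d_t$; this holds automatically since $d_t\geq 1$ gives $\sum_t d_t\geq T$ and $\gamma<\tfrac{1}{2}$ gives $\tfrac{2K}{\gamma}>4$. What your route buys is a clean, mechanically verifiable chain (the Cauchy--Schwarz step and the final comparison are elementary and avoid the paper's somewhat loosely justified intermediate line where the first term is halved); what it costs is the dependence on the two side conditions $d_t\geq 1$ and $\gamma<\tfrac12$ in the last step, which the paper's concavity-based merge does not need (it needs only $K/\gamma\geq 4$). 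Both proofs rest on the same two ingredients---the Theorem~2 bound and the $m(m+1)/2$ delay-counting observation---so the difference is in how the $L\vert\mathcal{M}\vert$ term is traded into the square root, and your trade is legitimate under the paper's stated assumptions.
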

\begin{proof}
Thus,
\begin{align*}
  &\mathcal{R}(T)\leq R\Vert \W\Vert^2 \sqrt{\frac{2 K}{\gamma}\left[\frac{T}{2}+2\sum_{t\notin \mathcal{M}}d_t\right]}+L\vert \mathcal{M}\vert\\
  &\leq \frac{R\Vert \W\Vert_F}{2}\sqrt{\frac{2 K}{\gamma}\left[\frac{T}{2}+2\sum_{t\notin \mathcal{M}}d_t\right]}+\frac{1}{2}\sqrt{\frac{KL^2m(m+1)}{\gamma}}\\
  &\leq R\Vert \W\Vert_F\sqrt{\frac{2 K}{\gamma}\left[\frac{T}{2}+2\sum_{t\notin \mathcal{M}}d_t+\frac{L^2m(m+1)}{2R^2\Vert \W\Vert_F^2}\right]}\\
  & \leq R\Vert \W\Vert_F\sqrt{\frac{2 K}{\gamma}\left[\frac{T}{2}+\left(2+\frac{L^2}{R^2\Vert \W\Vert_F^2}\right)\sum_{t=1}^Td_t\right]}
\end{align*}
Where the second inequality follows from the concavity property of $f(x)=\sqrt{x}$. 

\end{proof}
The regret bound showed above is independent of $\mathcal{M}$. However, it depends on $T$ and $\sum_{t=1}^Td_t$. 
\subsection{Without Boundedness Condition on the Loss for Missing Samples }
Here, we do not assume that the loss for the missing samples is upper bounded by a fixed quantity. Now, let us find regret on missing samples.
\begin{theorem}
Let $\{\mathbf{x}_t\}_{t=1}^T$ be the sequence of examples observed by Algorithm~1. Let $\mathbb{I}[\tilde{y}_t=y_t]$ be the bandit feedback corresponding to round $t$ which is observed only after a delay of $d_t$. Let $\Vert \mathbf{x}_t\Vert_2\leq R,\;\forall t\in[T]$ and $\mathcal{M}=\{t\in[T]\;|\;t+d_t>T\}$. Then for any $W\in \mathbb{R}^{K\times d}$, the regret achieved by Delaytron is as follows.
\begin{align*}
&\mathcal{R}(T)\leq  R\Vert \W\Vert_F\left[\vert \mathcal{M}\vert+\sqrt{\frac{2 K}{\gamma}\left[\frac{T}{2}+2\sum_{t\notin \mathcal{M}}d_t+\vert \mathcal{M}\vert T\right]}\right]
\end{align*}
\end{theorem}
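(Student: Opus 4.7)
The plan is to mimic the proof of Theorem~2 but replace the bounded-loss estimate $L|\mathcal{M}|$ with a bound coming from convexity of the multiclass hinge loss. I decompose the regret as $\mathcal{R}(T) = \mathbb{E}\left[\sum_{t\notin\mathcal{M}}(l_H(\W^t,(\x_t,y_t)) - l_H(\W,(\x_t,y_t)))\right] + \mathbb{E}\left[\sum_{t\in\mathcal{M}}(l_H(\W^t,(\x_t,y_t)) - l_H(\W,(\x_t,y_t)))\right]$. Theorem~1 directly handles the first piece, giving $\frac{1}{2\eta}\|\W\|_F^2 + \frac{\eta K R^2}{\gamma}\bigl[\frac{T}{2} + 2\sum_{t\notin\mathcal{M}} d_t\bigr]$ as in equation~(\ref{eq:Regret-Unmissed}). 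The remaining task is a bound on the missing-sample regret that does not require a finite loss bound $L$.

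For each $t\in\mathcal{M}$, I would apply convexity of $l_H$ together with Cauchy--Schwarz and the triangle inequality. Writing $U^t = -\nabla l_H(\W^t,(\x_t,y_t))$, convexity gives $l_H(\W^t,(\x_t,y_t)) - l_H(\W,(\x_t,y_t)) \leq \langle U^t,\,\W - \W^t\rangle \leq \|U^t\|_F\bigl(\|\W\|_F + \|\W^t\|_F\bigr)$. The hinge-loss gradient has at most two non-zero rows, each of norm at most $\|\x_t\|\leq R$, so $\|U^t\|_F$ is bounded by a constant multiple of $R$. Summed over $t\in\mathcal{M}$, the $\|\W\|_F$ piece contributes exactly the leading $R\|\W\|_F|\mathcal{M}|$ term of the claimed bound.

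For the $\|\W^t\|_F$ piece, I would use $\W^t = \eta\sum_{s\,:\,s+d_s\leq t}\tilde{U}^s$ (at most $T$ such terms), combined with the second-moment estimate $\mathbb{E}_{P_s}\|\tilde{U}^s\|_F^2 \leq KR^2/\gamma$ from Lemma~5 of \cite{10.1145/1390156.1390212} that was already used inside the proof of Theorem~1. Processed through Jensen's inequality, this gives a per-round bound on $\mathbb{E}\|\W^t\|_F$ which, when multiplied by $R$ and summed over $t\in\mathcal{M}$, yields a term of the form $\frac{\eta K R^2}{\gamma}|\mathcal{M}|T$ that slots cleanly into the bracket alongside $\frac{T}{2} + 2\sum_{t\notin\mathcal{M}} d_t$. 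Putting the two pieces together gives
\begin{align*}
\mathcal{R}(T) \leq \frac{1}{2\eta}\|\W\|_F^2 + \frac{\eta KR^2}{\gamma}\Big[\frac{T}{2} + 2\sum_{t\notin\mathcal{M}} d_t + |\mathcal{M}|T\Big] + R\|\W\|_F|\mathcal{M}|,
\end{align*}
after which I set $\eta = \|\W\|_F\big/\sqrt{\frac{2KR^2}{\gamma}\bigl[\frac{T}{2} + 2\sum_{t\notin\mathcal{M}}d_t + |\mathcal{M}|T\bigr]}$, exactly as in the proof of Theorem~2, and the AM--GM identity on the first two terms produces the stated bound.

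The hard part will be ensuring the constants line up in the $\|\W^t\|_F$ step. A routine triangle-inequality argument only yields $\sqrt{K/\gamma}$ inside the bracket rather than $K/\gamma$; to obtain the exact coefficient $\frac{\eta KR^2}{\gamma}|\mathcal{M}|T$ stated in the theorem, one must either invoke the squared-norm bound $\mathbb{E}\|\tilde{U}^s\|_F^2\leq KR^2/\gamma$ at the right moment, or else mirror the potential-function telescoping of Theorem~1 on the missing samples instead of bounding $\|\W^t\|_F$ in isolation. Everything else---the regret decomposition, the re-use of Theorem~1, and the final tuning of $\eta$---is parallel to the bounded-loss proof and is routine.
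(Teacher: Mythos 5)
Your proposal is correct and follows essentially the same route as the paper's own proof: the same split of the regret into missed and unmissed rounds, Theorem~1 on the unmissed part, convexity plus Cauchy--Schwarz and the triangle inequality for $t\in\mathcal{M}$ with $\Vert\nabla l_H\Vert_F\lesssim R$ and $\mathbb{E}[\Vert \W^t\Vert_F]\leq \eta R\sqrt{K/\gamma}\,(t-1)$, and the same choice of $\eta$ at the end. The coefficient issue you flag as the ``hard part'' is not an obstacle: since $K\geq 1\geq\gamma$ we have $\sqrt{K/\gamma}\leq K/\gamma$, which is precisely the (silent) relaxation the paper uses to pass from the triangle-inequality estimate to $\mathbb{E}[\Vert \W^t\Vert_F]\leq \frac{\eta K T R}{\gamma}$, so your routine argument already yields the stated $\frac{\eta K R^2}{\gamma}\vert\mathcal{M}\vert T$ term.
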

\begin{proof}
\begin{align*}
\nonumber &\mathbb{E}\left[\sum_{t\in \mathcal{M}}\left\{l_H(\W^t,(\x_t,y_t))-l_H(\W,(\x_t,y_t))\right\}\right]\\
&\leq \mathbb{E}\left[\sum_{t\in \mathcal{M}}\langle \nabla l_H(\W^t,(\x_t,y_t)), \W^t-\W^*\rangle\right]\\
&\leq \mathbb{E}\left[\sum_{t\in \mathcal{M}}  \Vert \nabla l_H(\W^t,(\x_t,y_t)) \Vert_F. \Vert \W^t-\W^*\Vert_F\right]
\end{align*}
But, $\Vert \nabla l_H(\W^t,(\mathbf{x}_t,y_t))\Vert_F\leq \Vert\mathbf{x}_t\Vert_2\leq R$. Also, $\Vert \W^t-\W\Vert_F\leq \Vert \W^t\Vert_F+\Vert \W\Vert_F$. But, $\W^t=\eta\sum_{r=1}^{t-1}\sum_{s\in \mathcal{S}_r} \tilde{\U}^s$. Thus, $\Vert \W^t\Vert_F\leq \eta \sum_{r=1}^{t-1}\sum_{s\in \mathcal{S}_r}\Vert \tilde{\U}^s\Vert_F$. Taking expectation, we get, $\mathbb{E}[\Vert \W^t\Vert_F]\leq \eta \sum_{r=1}^{t-1}\sum_{s\in \mathcal{S}_r}\mathbb{E}[\Vert \tilde{\U}^s\Vert_F]\leq \eta R\sqrt{\frac{K}{\gamma}} \sum_{r=1}^{t-1}\vert \mathcal{S}_r\vert$. $\sum_{r=1}^{t-1}\vert \mathcal{S}_r\vert$ is the number of feedbacks received till time $t-1$ which is not more than $t-1$. Thus, $\mathbb{E}[\Vert \W^t\Vert_F]\leq  \frac{\eta K R}{\gamma}(t-1)\leq  \frac{\eta KT R}{\gamma}$. So, we get
\begin{align}
 \nonumber &\mathbb{E}\left[\sum_{t\in \mathcal{M}}l_H(\W^t,(\x_t,y_t))-\sum_{t\in \mathcal{M}}l_H(\W,(\x_t,y_t))\right]\\
 &\leq \vert \mathcal{M}\vert R(\Vert \W\Vert_F+\frac{\eta KT R}{\gamma})\label{eq:Regret-Missing-Samples1}
\end{align}
Using Theorem~1 and eq.(\ref{eq:Regret-Missing-Samples1}), we get, the overall regret as follows.
\begin{align*}
&\mathcal{R}(T)\leq \vert \mathcal{M}\vert R\Vert \W\Vert_F+
\frac{1}{2\eta}\Vert \W\Vert^2_F\\
&+\frac{\eta KR^2}{\gamma}\left[\frac{T}{2}+2\sum_{t\notin \mathcal{M}}d_t+\vert \mathcal{M}\vert T\right]
\end{align*}
Using $\eta = \frac{\Vert \W\Vert_F}{\sqrt{\frac{2 KR^2}{\gamma}\left[\frac{T}{2}+2\sum_{t\notin \mathcal{M}}d_t+\vert \mathcal{M}\vert T\right]}}$, we get the following bound.
\begin{align*}
\mathcal{R}(T)\leq  R\Vert \W\Vert_F\left[\vert \mathcal{M}\vert+\sqrt{\frac{2 K}{\gamma}\left[\frac{T}{2}+2\sum_{t\notin \mathcal{M}}d_t+\vert \mathcal{M}\vert T\right]}\right]
\end{align*}
\end{proof}
We get $\mathcal{O}\left(\sqrt{\frac{T}{2}+2\sum_{t\notin \mathcal{M}}d_t+\vert \mathcal{M}\vert T}\right)$ regret bound in case when we don't have any boundedness assumption on the loss incurred on missing samples. Corollary below give further refined regret bound.
\begin{corollary}
Let $\{\mathbf{x}_t\}_{t=1}^T$ be the sequence of examples observed by Algorithm~1. Let $\mathbb{I}[\tilde{y}_t=y_t]$ be the bandit feedback corresponding to round $t$ which is observed only after a delay of $d_t$. Let $\Vert \mathbf{x}_t\Vert_2\leq R,\;\forall t\in[T]$ and $\mathcal{M}=\{t\in[T]\;|\;t+d_t>T\}$. Then for any $W\in \mathbb{R}^{K\times d}$, the regret achieved by Delaytron is as follows.
\begin{align*}
\mathcal{R}(T)\leq  \mathcal{O}\left(R\Vert W\Vert_F\sqrt{\frac{2 K}{\gamma}\left[\frac{T}{2}+2\sum_{t=1}^Td_t+\vert \mathcal{M}\vert T\right]}\right)
\end{align*}
\end{corollary}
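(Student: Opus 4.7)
I would reduce the statement to Theorem~3 using the same absorption trick already deployed in Corollary~1, with the bounded-loss quantity $L\vert\mathcal{M}\vert$ replaced by the looser $R\Vert\W\Vert_F\vert\mathcal{M}\vert$ that the analysis of Theorem~3 produces for the missing samples. The key structural observation, already highlighted in the remark before Corollary~1, is that any configuration with $m=\vert\mathcal{M}\vert$ missing samples must force $\sum_{t=1}^T d_t\geq m(m+1)/2$, and hence $\vert\mathcal{M}\vert\leq\sqrt{2\sum_{t=1}^T d_t}$. This is the one inequality that lets an additive $\vert\mathcal{M}\vert$ term be traded for an additive $\sum_{t} d_t$ term inside a square root.

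Starting from the bound of Theorem~3,
\begin{align*}
\mathcal{R}(T)&\leq R\Vert\W\Vert_F\vert\mathcal{M}\vert \\
&\quad+R\Vert\W\Vert_F\sqrt{\tfrac{2K}{\gamma}\bigl[\tfrac{T}{2}+2\sum_{t\notin\mathcal{M}}d_t+\vert\mathcal{M}\vert T\bigr]},
\end{align*}
I would first rewrite the leading term as $R\Vert\W\Vert_F\sqrt{\vert\mathcal{M}\vert^2}$ and apply the inequality above to cast it in the same $\sqrt{\cdot}$ form as the second term, padding, if necessary, by the harmless factor $K/\gamma\geq 1$ so that both square roots carry the $\tfrac{2K}{\gamma}$ prefactor. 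Then I would merge the two square roots via $\sqrt{A}+\sqrt{B}\leq\sqrt{2(A+B)}$, the same concavity inequality used inside Corollary~1, together with the trivial relaxation $\sum_{t\notin\mathcal{M}}d_t\leq\sum_{t=1}^T d_t$.

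After these two steps the argument of the single remaining square root is a constant multiple of $\tfrac{2K}{\gamma}\bigl[\tfrac{T}{2}+2\sum_{t=1}^T d_t+\vert\mathcal{M}\vert T\bigr]$, which is exactly what the Corollary asserts inside the $\mathcal{O}(\cdot)$. The only place that asks for care is keeping track of the two numerical $2$'s produced by the $m^2\leq 2\sum d_t$ step and the $\sqrt{A}+\sqrt{B}\leq\sqrt{2(A+B)}$ step, but since the final statement is only up to a hidden multiplicative constant this is bookkeeping rather than a genuine obstacle. There is no new probabilistic or algorithmic content beyond what Theorem~3 already supplies; the corollary is essentially a cosmetic simplification of that theorem's bound.
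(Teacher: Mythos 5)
Your proposal is correct and follows essentially the same route as the paper's own proof: start from Theorem~3, use the observation (stated before Corollary~1) that $m=\vert\mathcal{M}\vert$ missing samples force $\sum_{t\in\mathcal{M}}d_t\geq m(m+1)/2$ to turn the additive $\vert\mathcal{M}\vert$ into a delay-sum term under the square root (padding with $K/\gamma\geq 1$), merge the two square roots by concavity, and relax $\sum_{t\notin\mathcal{M}}d_t$ to $\sum_{t=1}^T d_t$. The differences are only in constant bookkeeping, which the $\mathcal{O}(\cdot)$ in the statement absorbs.
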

\begin{proof}
We see that 
\begin{align*}
&\sqrt{\frac{2 K}{\gamma}\left[\frac{T}{2}+2\sum_{t\notin \mathcal{M}}d_t+\vert \mathcal{M}\vert T\right]}+\vert \mathcal{M}\vert\\
&\leq \frac{1}{2}\sqrt{\frac{2 K}{\gamma}\left[\frac{T}{2}+2\sum_{t\notin \mathcal{M}}d_t+\vert \mathcal{M}\vert T\right]}+\frac{1}{2}\sqrt{\frac{Km(m+1)}{\gamma}}\\
&\leq \sqrt{\frac{2 K}{\gamma}\left[\frac{T}{2}+2\sum_{t\notin \mathcal{M}}d_t+m(m+1)/2+\vert \mathcal{M}\vert T\right]}\\
   & \leq \sqrt{\frac{2 K}{\gamma}\left[\frac{T}{2}+2\sum_{t=1}^Td_t+\vert \mathcal{M}\vert T\right]}.
   \end{align*}
Where the second inequality follows from the concavity property of $f(x)=\sqrt{x}$. Thus,
\begin{align*}
&\mathcal{R}(T)\leq  R\Vert \W\Vert_F\left[\sqrt{\frac{2 K}{\gamma}\left[\frac{T}{2}+2\sum_{t\notin \mathcal{M}}d_t+\vert \mathcal{M}\vert T\right]}+\vert \mathcal{M}\vert\right]
\\ 
&\leq R\Vert \W\Vert_F\sqrt{\frac{2 K}{\gamma}\left[\frac{T}{2}+2\sum_{t=1}^Td_t+\vert \mathcal{M}\vert T\right]}
\end{align*}
\end{proof}

\section{Adaptive Delaytron for Unknown $T$ and $\sum_{t=1}^T d_t$}
The fixed step size $\eta$ used in Delaytron requires the knowledge of $T$ and $\sum_{t\notin\mathcal{M}}d_t$. We can make it independent of $T$ and $\sum_{t\notin\mathcal{M}}d_t$ by using doubling trick. However, with delays, the standard doubling trick \cite{10.5555/2371238} does not work. We use the doubling trick proposed in \cite{NEURIPS2019_ae2a2db4}, where $T$ and $\sum_{t\notin \mathcal{M}}d_t$ are unknown. We define $m_t$ as the number of samples for which bandit feedback is missing till round $t$. The idea is to start a new epoch every time $\sum_{s=1}^tm_s$ (that tracks $\sum_{s=1}^td_s$) doubles. We define the $e$-th epoch as 
\begin{align}
    \mathcal{T}_e=\{t\;\vert \; 2^{e-1} \leq \sum_{s=1}^t m_s < 2^e\}.
\end{align}
$\mathcal{T}_e$ is the set of consecutive rounds in which the sum of delays is within a given interval. During the $e$-th epoch, the algorithm uses the step size $\eta_e=\sqrt{\frac{1}{2^e}}$. The resulting approach is called Adaptive Delaytron and is discussed in details in Algorithm~\ref{algo:algo2}.

\begin{algorithm}[h]
    \caption{Adaptive Delaytron for Unknown $T$ and $\sum_{t=1}^T d_t$}
    \label{algo:algo2}
    \textbf{Input}: $\gamma,\beta \in (0,1)$, Step size $\eta>0$\\
    \textbf{Initialize}: Set $W^{1} = 0 \in \mathbb{R}^{K \times d}$. Set the epoch index $e=0$ and $\eta_0=1$. 
    \begin{algorithmic}[1] 
    \FOR{$t=1,\cdots,T$}
    \STATE Receive  $\mathbf{x}_{t} \in \mathbb{R}^{d}$. 
    \STATE Set $\hat{y}^{t} = {\arg \max}_{r\in[K]}\;(W_{t}\mathbf{x}_{t})_r$
    \STATE Set $P_t(r) = (1-\gamma)\mathbb{I}[r=\hat{y}^{t}] + \frac{\gamma}{K},\; r\in[K]$
    \STATE Randomly sample $\tilde{y}_{t}$ according to $P_t$. 
    \STATE Predict $\tilde{y}_{t}$
    \STATE Obtain a set of delayed feedbacks $\mathbb{I}[\tilde{y}_s=y_s]$ for all $s\in \mathcal{S}_t$, where $\tilde{y}_s$ is the prediction made at round $s$ and $y_s$ is the corresponding ground truth.
    \STATE Update the number of missing samples so far 
    \begin{align*}
        m_t=t-\sum_{s=1}^t\vert \mathcal{S}_s\vert
    \end{align*}
    \STATE If $\sum_{s=1}^t m_t\geq 2^e$, then update $e=e+1$ 
    \FOR{$s\in S_t$}
    \STATE For all $r\in [K]$ and $j\in[d]$, compute $\tilde{U}^s_{r,j}$ as below:\\ $\tilde{U}^s_{r,j}=\mathbf{x}_t(j)\left[\frac{\mathbb{I}[\tilde{y}_s=y_s]\mathbb{I}[\tilde{y}_s=r]}{P_s(r)}-\mathbb{I}[\hat{y}_s=r]\right]$
    \ENDFOR
    \STATE Update: $W^{t+1}$ = $W^{t} + \eta_e \sum_{s\in \mathcal{S}_t}\tilde{U}^s$ where $\eta_e=\sqrt{\frac{1}{2^e}}$
    \ENDFOR
    \end{algorithmic}
    \end{algorithm}
\begin{theorem}
Let $\{\mathbf{x}_t\}_{t=1}^T$ be the sequence of examples observed by Algorithm~1. Let $\mathbb{I}[\tilde{y}_t=y_t]$ be the bandit feedback corresponding to round $t$ which is observed only after a delay of $d_t$. Let $\Vert \mathbf{x}_t\Vert_2\leq R,\;\forall t\in[T]$ and $\mathcal{M}=\{t\in[T]\;|\;t+d_t>T\}$. Let $l_H(W,(\mathbf{x}_t,y_t))\leq L,\;\forall t\in \mathcal{M}, \forall W\in \mathbb{R}^{K\times d}$. Then for any $W\in \mathbb{R}^{K\times d}$, the regret achieved by Adaptive Delaytron is as follows.
\begin{align*}
\mathcal{R}(T)\leq  10\left[\frac{1}{2}\Vert W\Vert_F^2+\frac{KR^2}{\gamma}+L\right]\sqrt{T+\sum_{t=1}^Td_t}
\end{align*}
\end{theorem}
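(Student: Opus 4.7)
The plan is a classical doubling-trick reduction: replay the Theorem~2 analysis on each epoch separately with constant step size $\eta_e=2^{-e/2}$, then sum the per-epoch regrets via a geometric series. The epochs $\mathcal T_0,\ldots,\mathcal T_E$ are defined so that $\sum_{s=1}^{t}m_s$ roughly doubles across consecutive epochs, and, because every pending feedback contributes one unit to $m_r$ for each of its $d_s$ pending rounds, the global quantity $\sum_{s=1}^{T}m_s$ sits between $T$ and $\sum_t d_t$. Consequently $2^{E}$ is the natural surrogate for $T+\sum_t d_t$, which is what we need.

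First I would restrict the analysis to a single epoch $\mathcal T_e$, treat $\W^{\mathrm{start}_e}$ as a fresh starting point, and rerun the telescoping of $\Vert \W^t-\W\Vert_F^2$ from Theorem~1 inside the epoch. The step size is constant on $\mathcal T_e$, and Lemma~\ref{lemma:count-lemma} localizes to any contiguous sub-interval of time; combining this with the missing-sample accounting from Theorem~2 gives a per-epoch bound of the form
\begin{align*}
R_e\;\leq\;\frac{\Vert \W\Vert_F^2+\Vert \W^{\mathrm{start}_e}\Vert_F^2}{2\eta_e}+\eta_e\frac{KR^2}{\gamma}\Bigl[\tfrac{T_e}{2}+2D_e\Bigr]+L\,|\mathcal M_e|,
\end{align*}
where $T_e=|\mathcal T_e|$, $D_e$ is the intra-epoch analog of $\sum_{t\notin\mathcal M}d_t$, and $\mathcal M_e$ are the epoch's missing samples.

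Next I would use the epoch rule $2^{e-1}\leq\sum_{s\in\mathcal T_e\cap[1,t]}m_s<2^e$, together with the observation $m_s\geq 1$ in every round, to deduce $T_e<2^e$ and, by applying Lemma~\ref{lemma:count-lemma} to the contiguous block $\mathcal T_e$, $D_e=O(2^e)$. Substituting $\eta_e=2^{-e/2}$ gives $R_e\leq C\cdot 2^{e/2}$ with $C=\tfrac12\Vert \W\Vert_F^2+c\,KR^2/\gamma+L$ for an absolute constant $c$. Summing via $\sum_{e=0}^{E}2^{e/2}\leq \tfrac{\sqrt 2}{\sqrt 2-1}\,2^{E/2}$ and bounding $2^E\leq 2\sum_{s=1}^{T}m_s\leq 2(T+\sum_t d_t)$ yields the advertised $\mathcal O\!\bigl(\sqrt{T+\sum_t d_t}\bigr)$ rate; propagating the geometric-series constant $\tfrac{\sqrt 2}{\sqrt 2-1}\approx 3.41$ through the epoch sum is exactly what produces the leading~$10$, and I would not grind through that arithmetic.

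The main obstacle is that Algorithm~\ref{algo:algo2} applies feedbacks from all prior rounds with the \emph{current} epoch's step size, so an update $\tilde U^s$ generated in $\mathcal T_{e'}$ may be applied with $\eta_e$ for some $e>e'$. This breaks a naive per-epoch invariant, because the telescoping of $\Vert \W^t-\W\Vert_F^2$ used in Theorem~1 implicitly assumes a fixed $\eta$ whenever a given $\tilde U^s$ appears. I would handle this by charging every cross-epoch feedback (predicted in $\mathcal T_{e'}$, arriving in $\mathcal T_e$ with $e>e'$) to $\mathcal M_{e'}$ on the originating side (costing at most $L$ each by the bounded-loss assumption) and treating it as a fresh input on the arrival side. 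The number of such cross-epoch items at the end of $\mathcal T_{e'}$ is at most $m_{\mathrm{end}\,e'}<2^{e'}$, so they are already absorbed in the $O(2^e)$ scaling, affecting only the constant~$C$ and not the rate; the same bookkeeping also controls $\Vert \W^{\mathrm{start}_e}\Vert_F^2$ in the first term of $R_e$.
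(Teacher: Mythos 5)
Your overall strategy is the same as the paper's: apply the fixed-step analysis (Theorems~1--2) epoch by epoch with $\eta_e=2^{-e/2}$, bound the intra-epoch delay sum by the epoch budget $2^{e-1}$, charge feedbacks that do not arrive within their own epoch at cost $L$ each, and finish with a geometric sum together with $2^{E}\leq 2\sum_{t=1}^T d_t$. Your bound $T_e<2^e$ via $m_s\geq 1$ (valid, since $d_t\geq 1$ keeps the round-$t$ feedback pending at round $t$) is a legitimate, arguably cleaner, replacement for the paper's separate estimate $\sum_e T_e 2^{-e/2}\leq 5\sqrt{T}$, and charging cross-epoch feedbacks to $\mathcal{M}_{e'}$ is exactly the paper's definition of $\mathcal{M}_e$.

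However, two of your quantitative claims do not hold as stated, and both are needed for $R_e\leq C\,2^{e/2}$. First, you bound the number of epoch-$e$ missing feedbacks only by $m_{\mathrm{end}\,e}<2^{e}$; with that, the term $L\vert\mathcal{M}_e\vert$ is of order $L\,2^{e}$, which sums over epochs to $O\bigl(\sum_t d_t\bigr)$ --- linear, not a square root. The paper needs and proves the sharper bound $\vert\mathcal{M}_e\vert\leq 2^{e/2}$: every sample of $\mathcal{M}_e$ remains pending through the end of the epoch, so even the cheapest configuration contributes $1+2+\cdots+\vert\mathcal{M}_e\vert=\frac{\vert\mathcal{M}_e\vert(\vert\mathcal{M}_e\vert+1)}{2}\leq 2^{e-1}$ to the epoch's sum of $m_s$; this triangular-number argument is absent from your proposal and cannot be replaced by the $2^{e}$ bound. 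Second, your per-epoch bound carries the term $\frac{\Vert \W^{\mathrm{start}_e}\Vert_F^2}{2\eta_e}$ and you assert your bookkeeping ``controls'' it, but you never show how, and the natural estimate is not enough: the algorithm never resets, so $\E\Vert \W^{\mathrm{start}_e}\Vert_F$ can be of order $2^{e/2}R\sqrt{K/\gamma}$ (at most $2^{e-1}$ applied updates, each of expected norm at most $\eta_{e'}R\sqrt{K/\gamma}$ with the larger step sizes of earlier epochs), whence $\frac{\Vert \W^{\mathrm{start}_e}\Vert_F^2}{2\eta_e}$ is of order $\frac{KR^2}{\gamma}2^{3e/2}$, which by itself already exceeds the claimed $\sqrt{T+\sum_t d_t}$ rate. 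The paper's proof never introduces this term: it applies Theorem~1 to each epoch with only $\frac{1}{2\eta_e}\Vert \W\Vert_F^2$, i.e., implicitly as if the iterate restarted from zero at each epoch boundary. To follow your more careful route you would need a genuinely new ingredient (explicit restarts, a projection step giving a uniform bound on $\Vert \W^t-\W\Vert_F$, or a telescoping across epochs exploiting the decreasing $\eta_e$), not just constant-level bookkeeping.
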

\begin{proof}
Define $\mathcal{M}_e$ as the set of bandit feedbacks in epoch $e$ that are not received withing epoch $e$. Denote by $T_e=\max \mathcal{T}_e$ as the last round in $\mathcal{T}_e$. Note that $\mathcal{T}_e$ is also set of consecutive rounds from $T_{e-1}+1$ to $T_e$. Every round $t\notin \mathcal{T}_e$ such that $t\in \mathcal{M}_e$ contributes exactly $d_t$ to $\sum_{s=T_{e-1}+1}^{T_e}m_s$, since the $t$-th feedback is missing for $d_t$ rounds sometime between $T_{e-1}+1$ and $T_e$. Therefore,
\begin{align*}
    \sum_{t\in \mathcal{T}_e,t\notin \mathcal{M}_e}d_t \leq \sum_{s=T_{e-1}+1}^{T_e}m_s \leq 2^{e-1}.
\end{align*}
Where the last inequality uses the fact that if $\sum_{s=T_{e-1}+1}^{T_e}m_s > 2^{e-1}\geq 2^{e-1}+2^{e-1}=2^e$, then epoch $e+1$ should have been already started. We apply Theorem~1 separately on each epoch, which gives upper bound on the regret of epoch $e$ as follows.
\begin{align*}
    R_e &:= \mathbb{E}\left[\sum_{t\in \mathcal{T}_e} l_H(\W^t,(\mathbf{x}_t,y_t))- \sum_{t\in \mathcal{T}_e} l_H(\W,(\mathbf{x}_t,y_t))\right]\\
    &\leq  \frac{1}{2\eta_e}\Vert \W\Vert^2_F+\frac{\eta_e KR^2}{\gamma}\left[\frac{T_e}{2}+2\sum_{t\in \mathcal{T}_e,t\notin \mathcal{M}_e}d_t\right]+L\vert \mathcal{M}_e\vert
\end{align*}

Now we want to find the largest set $\mathcal{M}_e$ such that $\sum_{s=T_{e-1}+1}^{T_e} m_s \leq 2^{e-1}$ is still possible. The cheapest way to that is when the feedback from round $T_e$ is delayed by 1 (contributing 1 to $\sum_{s=T_{e-1}+1}^{T_e} m_s$), feedback from round $T_{e}-1$ is delayed by 2 (contributing 2 to $\sum_{s=T_{e-1}+1}^{T_e} m_s$) and so on. This gives $\sum_{i=1}^{\vert \mathcal{M}_e\vert}i=\frac{\vert \mathcal{M}_e\vert (\vert \mathcal{M}_e\vert +1)}{2}\leq 2^{e-1}$. This can happen if $\vert \mathcal{M}_e\vert \leq 2^{\frac{e}{2}}$. By choosing $\eta_e=2^{-\frac{e}{2}}$, we obtain
\begin{align*}
    R_e & \leq  \frac{1}{2}2^{\frac{e}{2}}\Vert \W\Vert^2_F+\frac{2^{-\frac{e}{2}} KR^2}{\gamma}\left[\frac{T_e}{2}+2^e\right]+L2^{\frac{e}{2}}\\
    &\leq 2^{\frac{e}{2}}\left[\frac{1}{2}\Vert \W\Vert_F^2+\frac{KR^2}{\gamma}+L\right]+\frac{2KR^2T_e}{\gamma}2^{-\frac{e}{2}}
\end{align*}
Let the last epoch be denoted as $E$, then we can conclude that
\begin{align*}
    &\mathbb{E}[\mathcal{R}(T)]=\sum_{e=1}^ER_e\\
    & \leq \sum_{e=1}^E \left\{2^{\frac{e}{2}}\left[\frac{1}{2}\Vert \W\Vert_F^2+\frac{KR^2}{\gamma}+L\right]+\frac{2KR^2T_e}{\gamma}2^{-\frac{e}{2}}\right\}\\
    &=\frac{2^{E/2}-1}{\sqrt{2}-1}\left[\frac{1}{2}\Vert \W\Vert_F^2+\frac{KR^2}{\gamma}+L\right]+\frac{2KR^2}{\gamma}\sum_{e=1}^ET_e2^{-\frac{e}{2}}
\end{align*}

\begin{figure}
    \centering
    \includegraphics[scale=0.39]{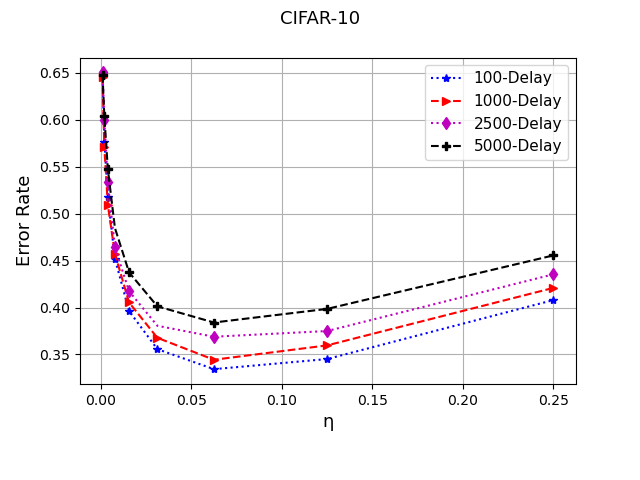} \\
    \includegraphics[scale=0.39]{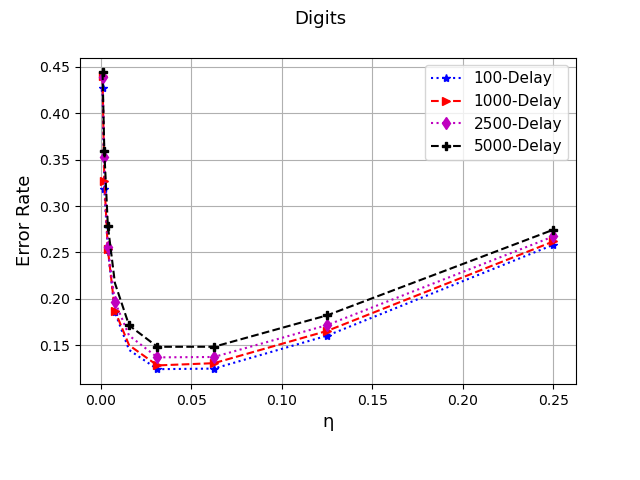} \\
    \includegraphics[scale=0.39]{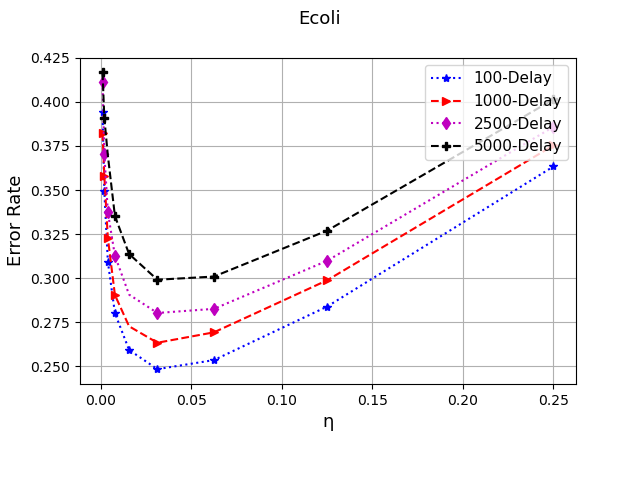} \\
    \includegraphics[scale=0.39]{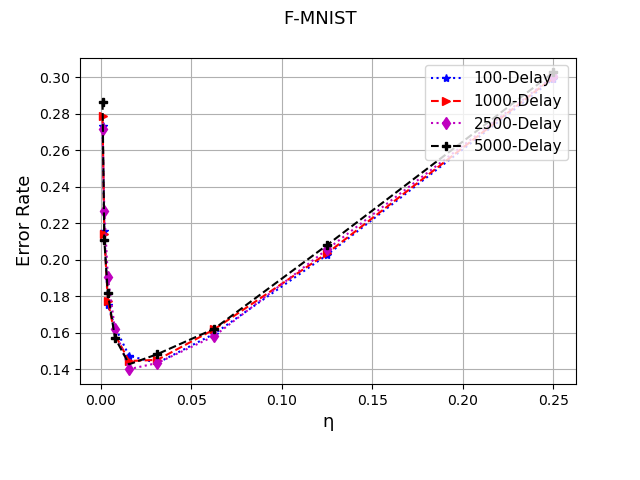} \\
    \includegraphics[scale=0.39]{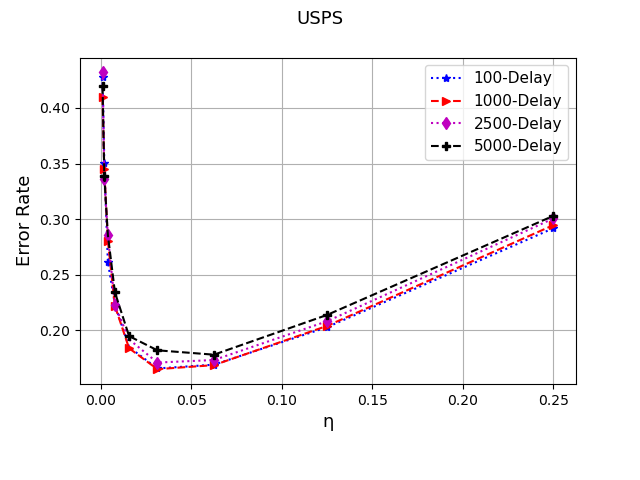} %
    \caption{Final error rates of Delaytron as the function of exploration parameter $\gamma$ for four different values of delay ($D=100, 1000, 2500$ and $5000$) for CIFAR-10, Digits, Ecoli, F-MNIST and USPS datasets}%
    \label{fig:gamma_plots}
\end{figure}

The maximum value of $\sum_{e=1}^ET_e2^{-\frac{e}{2}}$ (subject to $\sum_{e=1}^ET_e=T$) is achieved when $E=\lceil \log_2T\rceil$ and $e$-th epoch having length $2^e$. So, $\sum_{e=1}^ET_e2^{-\frac{e}{2}}\leq \sum_{e=1}^{\lceil \log_2T\rceil}2^{\frac{e}{2}}\leq \sqrt{2}\frac{2^{\frac{\lceil \log_2T\rceil}{2}}-1}{\sqrt{2}-1}\leq 5\sqrt{T}$. We also note that $\sum_{t=1}^Td_t\geq \sum_{t=1}^T\min\{d_t,T-t-1\}=\sum_{t=1}^Tm_t\geq \sum_{t=1}^{T_E}m_t\geq 2^E-1$. Thus,
$2^{\frac{E}{2}}\leq \sqrt{1+\sum_{t=1}^Td_t}\leq \sqrt{2\sum_{t=1}^Td_t}$.
Also, using $\sqrt{2}-1\geq 0.4$, we get
\begin{align*}
    &\mathbb{E}[\mathcal{R}(T)]
    \leq\frac{5}{2}\sqrt{2\sum_{t=1}^Td_t}\left[\frac{1}{2}\Vert \W\Vert_F^2+\frac{KR^2}{\gamma}+L\right]+\frac{10\sqrt{T}KR^2}{\gamma}\\
    &\leq 10\sqrt{\sum_{t=1}^Td_t}\left[\frac{1}{2}\Vert \W\Vert_F^2+\frac{KR^2}{\gamma}+L\right]\\
    &\;\;\;\;+10\sqrt{T}\left[\frac{1}{2}\Vert \W\Vert_F^2+\frac{KR^2}{\gamma}+L\right]\\
    &\leq 10\left[\frac{1}{2}\Vert \W\Vert_F^2+\frac{KR^2}{\gamma}+L\right]\sqrt{T+\sum_{t=1}^Td_t}
\end{align*}

\end{proof}
We see that even for unknown $T$ and $\sum_{t\notin\mathcal{M}}d_t$, Adaptive Delaytron achieves a regret bound of $\mathcal{O}\left(\sqrt{T+\sum_{t=1}^Td_t}\right)$.

\begin{figure*}
    \centering
    \subfigure
    {{\includegraphics[width=0.39\linewidth]{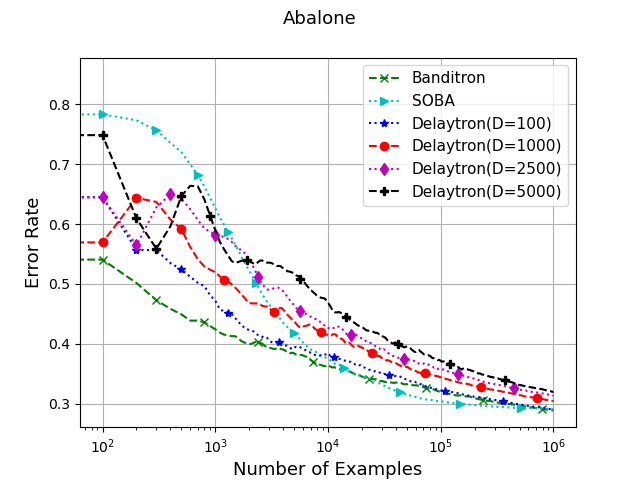} }}%
    \subfigure
    {{\includegraphics[width=0.39\linewidth]{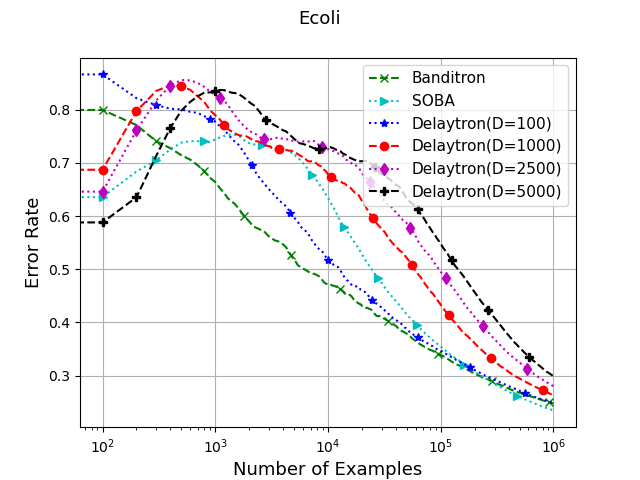} }}%
    \newline
    \subfigure
    {{\includegraphics[width=0.39\linewidth]{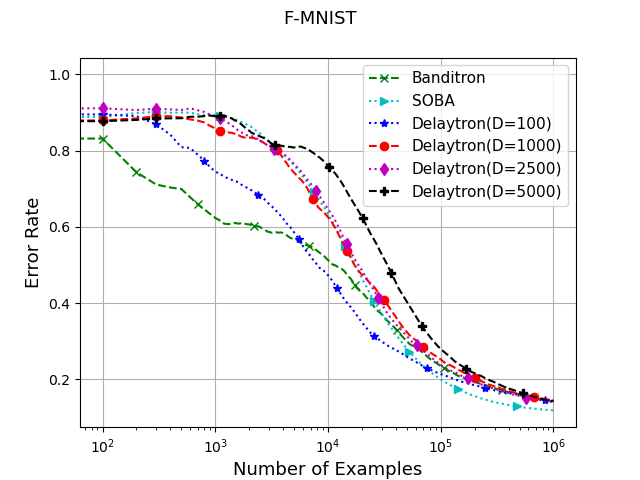} }}%
    \subfigure
    {{\includegraphics[width=0.39\linewidth]{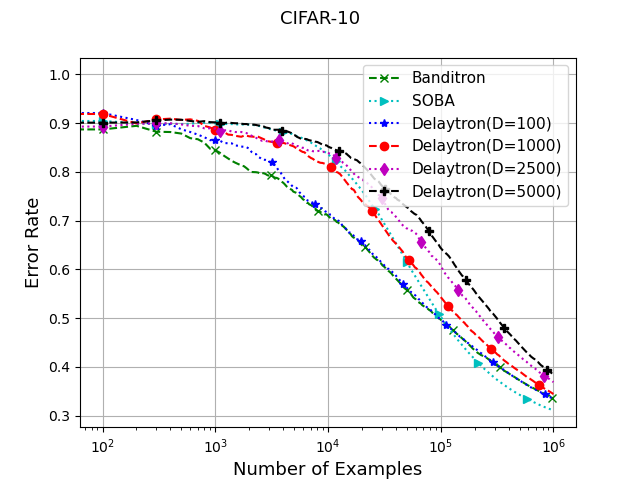} }}%
    \newline
    \subfigure
    {{\includegraphics[width=0.39\linewidth]{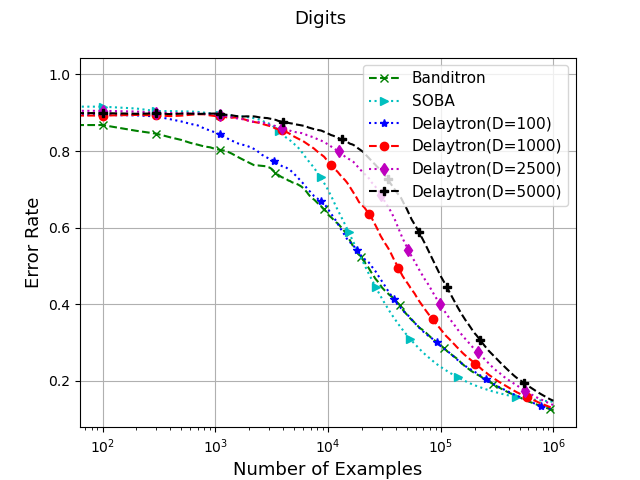} }}%
    \subfigure
    {{\includegraphics[width=0.39\linewidth]{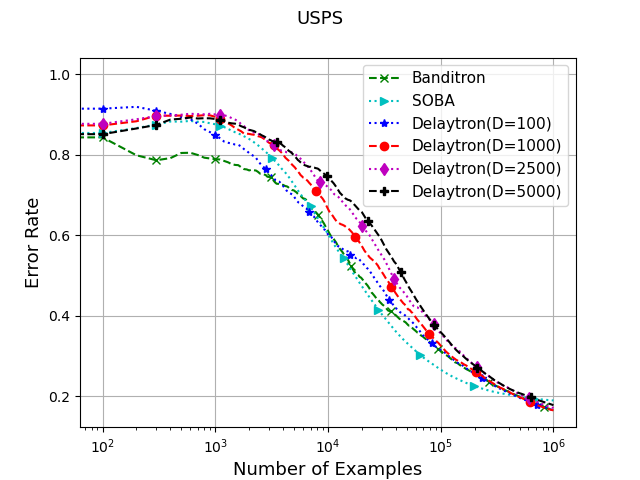} }}%
    \newline
    \subfigure
    {{\includegraphics[width=0.39\linewidth]{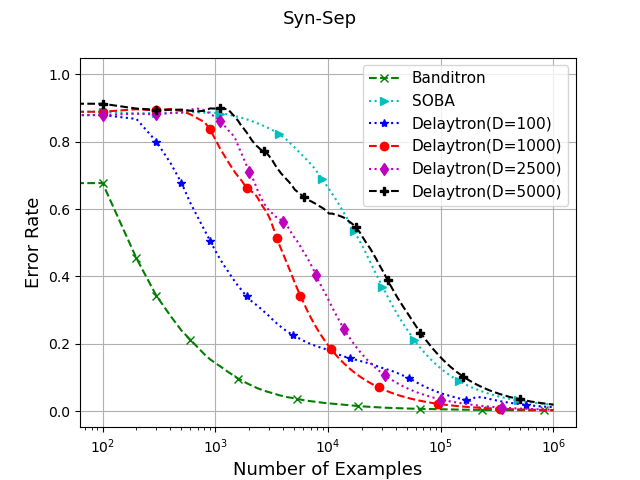} }}%
    \subfigure
    {{\includegraphics[width=0.39\linewidth]{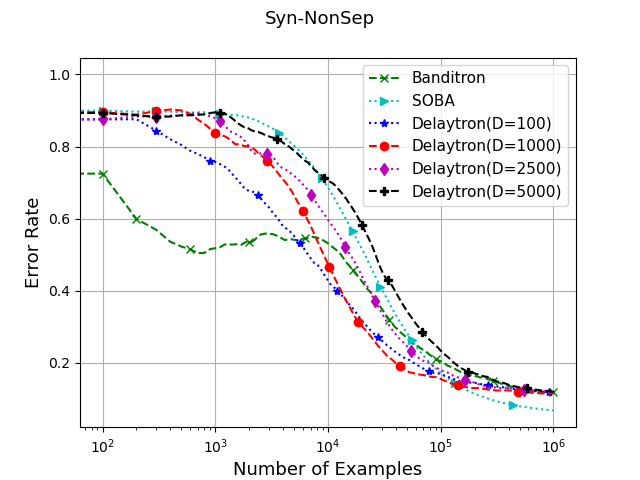} }}%
    \newline
     \caption{Average error rate of {\bf Delaytron} for four different values of delay ($D=100, 1000, 2500$ and $5000$) as compared against the other benchmark algorithm under the standard (0-Delay) setting.}%
    \label{fig:plots}
\end{figure*}

\begin{figure*}
    \centering
    \subfigure
    {{\includegraphics[width=0.39\linewidth]{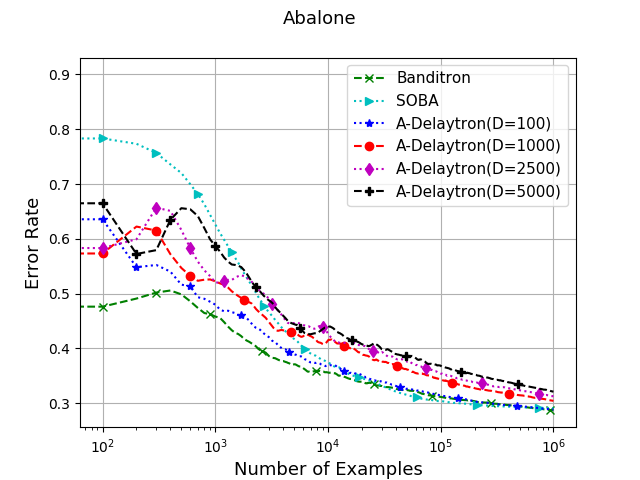} }}%
    \subfigure
    {{\includegraphics[width=0.39\linewidth]{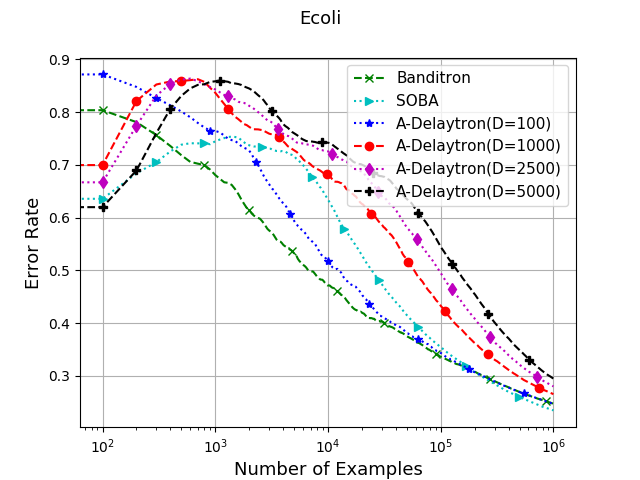} }}%
    \newline
    \subfigure
    {{\includegraphics[width=0.39\linewidth]{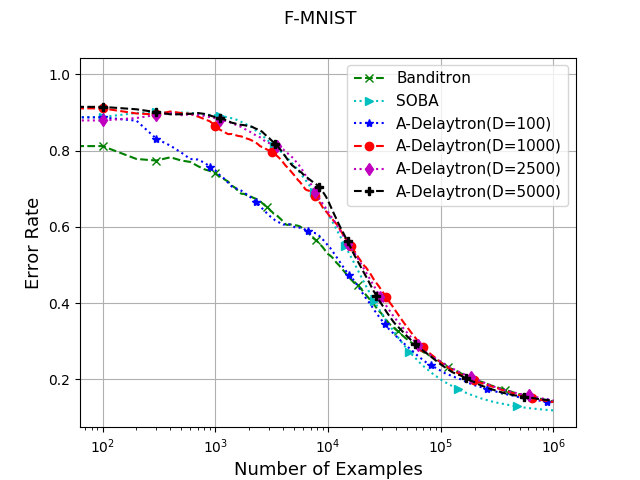} }}%
    \subfigure
    {{\includegraphics[width=0.39\linewidth]{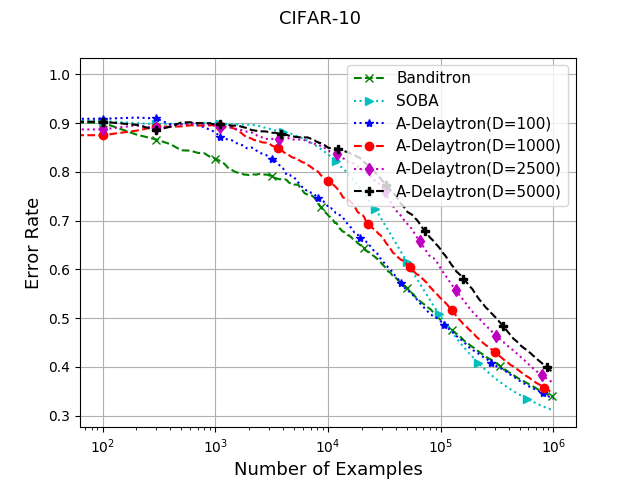} }}%
    \newline
    \subfigure
    {{\includegraphics[width=0.39\linewidth]{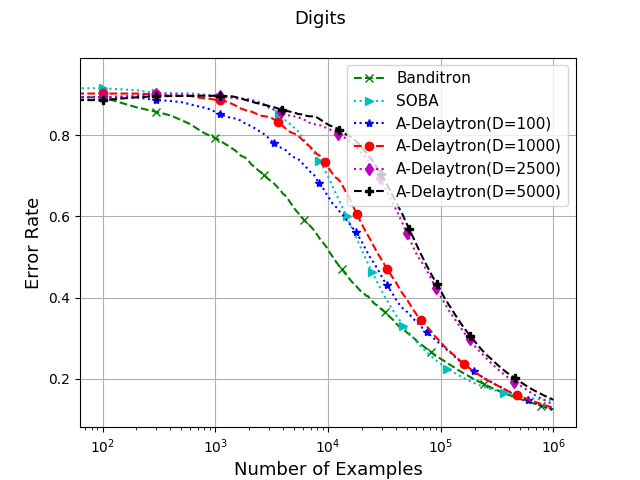} }}%
    \subfigure
    {{\includegraphics[width=0.39\linewidth]{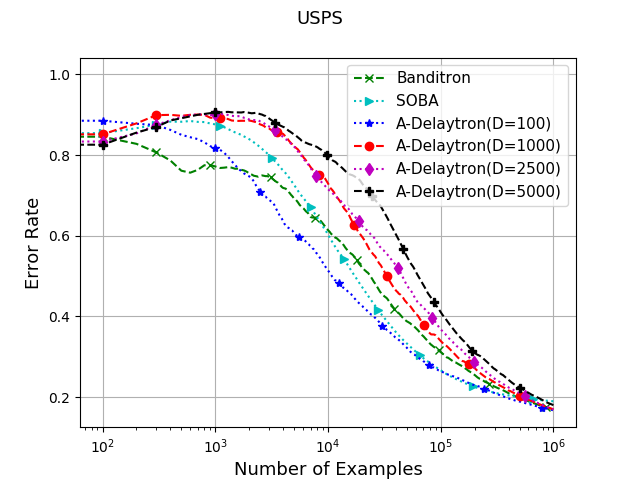} }}%
    \newline
    \subfigure
    {{\includegraphics[width=0.39\linewidth]{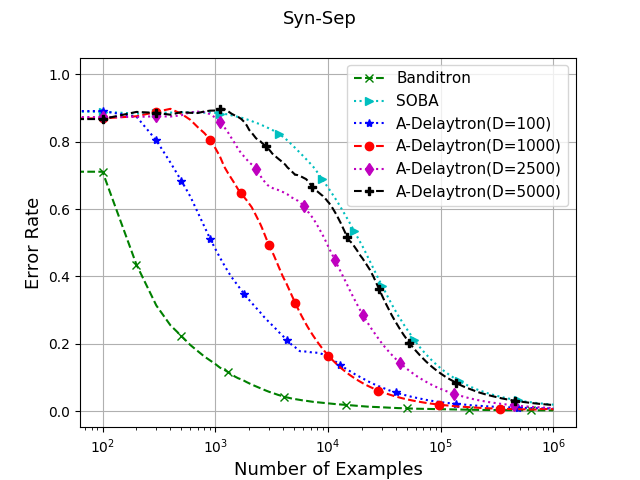} }}%
    \subfigure
    {{\includegraphics[width=0.39\linewidth]{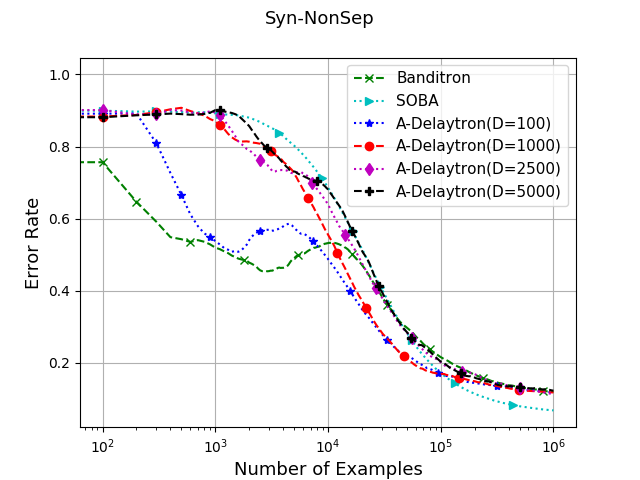} }}%
    \newline
     \caption{Average error rate of {\bf Adaptive Delaytron} for four different values of delay ($D=100, 1000, 2500$ and $5000$) as compared against the other benchmark algorithm under the standard (0-Delay) setting.}%
    \label{fig:plots2}
\end{figure*}

\section{Experiments}
This section empirically evaluates the effectiveness of our proposed approaches, Delaytron and Adaptive Delaytron, on various real-world and synthetic data sets.

{\bf Datasets Used: }We use CIFAR-10 \cite{krizhevsky2009learning}, Fashion-MNIST \cite{xiao2017fashion}, USPS \cite{hull1994database}, Digits, Abalone and Ecoli datasets from UCI repository \cite{Dua:2019}. To extract the features of examples in Fashion-MNIST dataset, we use a four-layer convolutional neural network as described in \cite{10.1007/978-3-030-75765-6_36}. Moreover, for CIFAR-10, we used a pre-trained VGG-16 \cite{simonyan2014very} model to extract features. 

We also performed experiments on synthetic datasets called SynSep and SynNonSep. SynSep is a 9-class, 400-dimensional synthetic data set of size $10^5$. While constructing SynSep, we ensure that the dataset is linearly separable. For more details about the dataset, please refer to \cite{10.1145/1390156.1390212}. The idea behind SynSep is to generate a simple dataset simulating a text document. The coordinates represent different words in a small vocabulary of size 400. SynNonSep is constructed the same way as SynSep except that a 5\% label noise is introduced, making the dataset non-separable.

{\bf Baselines: }We compared our proposed algorithm, Delaytron, under the delayed feedback setting with the present state-of-the-art bandit algorithms (Banditron \cite{10.1145/1390156.1390212}, and SOBA \cite{pmlr-v70-beygelzimer17a}) under the standard (0-Delay) setting. 

{\bf Experimental Settings and Hyper-parameter Selection: }We ran Delaytron for four different values of maximum delay (denoted by $D$). These are (a) 100 (b) 1000 (c) 2500 (d) 5000. We randomly sample the delay in the range $[0, D]$ during training at each trial. We ran Delaytron and other benchmarking algorithms for a wide range of the exploration parameter $\gamma$ values for each dataset and delay. Figure~\ref{fig:gamma_plots} plots the final error rates of Delaytron for different dataset as the function of exploration parameter $\gamma$. We chose the $\gamma$ value for which the average error rate achieved is minimum. Here, the averaging is done over 20 independent runs of the algorithm.

{\bf Results:}
Figure~\ref{fig:plots} and Figure~\ref{fig:plots2} show the plots of average error rates of Delaytron and Adaptive Delaytron (for the best value of parameter $\gamma$) against the number of instances observed so far. We plotted the result on a log-log scale to better visualize the asymptotic bounds. 

By observing Figure~\ref{fig:plots} and Figure~\ref{fig:plots2}, we see that as the number of trials (rounds) grows, the slope of the error rate of Delaytron under different delay settings is comparable to that of other benchmarking algorithms under the 0-Delay setting. The final error rate of Delaytron under various delays is also close to the 0-Delay approaches (e.g., Banditron and SOBA).

Figure~\ref{fig:plots2} shows that the performance of Adaptive Delaytron under different delay settings is comparable to that of other benchmarking algorithms under the 0-Delay setting. As a result, we can successfully learn a multiclass classifier under delayed bandit response settings without a priori knowledge of $T$.

We also observe that as the delay increases, Delaytron and Adaptive Delaytron take more rounds to converge. However, after a sufficient number of trials, the error curve of Delaytron meets the error curve of Banditron. This happens because our proposed algorithm successfully learns a multiclass classifier despite the delay in the received bandit feedback.

\section{Conclusion and Future Work}

In this paper, we proposed Delaytron algorithm that can efficiently learn multiclass classifiers with delayed bandit feedback. We show that the regret bound of the proposed approach for fixed step size is $\mathcal{O}\left(\sqrt{\frac{2 K}{\gamma}\left[\frac{T}{2}+\left(2+\frac{L^2}{R^2\Vert \W\Vert_F^2}\right)\sum_{t=1}^Td_t\right]}\right)$. Here, we assumed that the loss for each missing sample is upper bounded by $L$. On the other hand, when the loss for missing samples is not upper bounded, the regret bound of Delaytron is $\mathcal{O}\left(\sqrt{\frac{2 K}{\gamma}\left[\frac{T}{2}+2\sum_{t=1}^Td_t+\vert \mathcal{M}\vert T\right]}\right)$ where $\mathcal{M}$ is the set of missing samples in $T$ rounds. Note that the constant step size used in these bounds require the knowledge of $T$ and $\sum_{t=1}^Td_t$. When $T$ and $\sum_{t=1}^Td_t$ are unknown, we propose Adaptive Delaytron using a doubling trick for online learning and show that it achieves a regret bound of $\mathcal{O}\left(\sqrt{T+\sum_{t=1}^Td_t}\right)$. Experimental results show that Delaytron and Adaptive Delaytron with delayed bandit feedback performs comparable to the state of the art bandit feedback approaches which don't accept delays. 

 \bibliographystyle{plain}
\bibliography{example_paper}

\vfill

\end{document}